\newcommand{\rev}[1]{#1}
\theoremstyle{definition}
\newtheorem{theorem}{Theorem}
\newtheorem{lemma}{Lemma}
\newtheorem{assumption}{Assumption}
\newtheorem{definition}{Definition}
\newtheorem{example}{Example}
\newtheorem{corollary}{Corollary}
\theoremstyle{remark}
\newtheorem{remark}{Remark}
\newcommand{\cA}{\mathcal{A}}
\newcommand{\cC}{\mathcal{C}}
\newcommand{\cN}{\mathcal{N}}
\newcommand{\cS}{\mathcal{S}}
\newcommand{\bP}{\mathbb{P}}
\newcommand{\bR}{\mathbb{R}}
\renewcommand{\geq}{\geqslant}
\renewcommand{\leq}{\leqslant}
\def\hat{\widehat}
\DeclareMathOperator*{\argmin}{\text{arg\,min}}
\newcommand{\eqdef}{\triangleq}
\newcommand{\iid}{{i.i.d.~}}
\newcommand{\norm}[1]{\lVert #1 \rVert}
\def\ReLU{{\text{ReLU}}}
\begin{document}

\title{Pivotal Auto-Encoder via Self-Normalizing ReLU}

\author{Nelson Goldenstein, Jeremias Sulam, Yaniv Romano
\thanks{N. Goldenstein is with the Department of Electrical and Computer Engineering at the Technion, Israel.}
\thanks{J. Sulam is with the Biomedical Engineering Department and the Mathematical Institute for Data Science of Johns Hopkins University, USA.}
\thanks{Y. Romano is with the Department of Electrical and Computer Engineering and the Department of Computer Science at the Technion, Israel.}}

\maketitle

\begin{abstract}
Sparse auto-encoders are useful for extracting low-dimensional representations from high-dimensional data. However, their performance degrades sharply when the input noise at test time differs from the noise employed during training. This limitation hinders the applicability of auto-encoders in real-world scenarios where the level of noise in the input is unpredictable. In this paper, we formalize single hidden layer sparse auto-encoders as a transform learning problem. Leveraging the transform modeling interpretation, we propose an optimization problem that leads to a predictive model invariant to the noise level at test time. In other words, the same pre-trained model is able to generalize to different noise levels. The proposed optimization algorithm, derived from the square root lasso, is translated into a new, computationally efficient auto-encoding architecture. After proving that our new method is invariant to the noise level, we evaluate our approach by training networks using the proposed architecture for denoising tasks. Our experimental results demonstrate that the trained models yield a significant improvement in stability against varying types of noise compared to commonly used architectures.
\end{abstract}

\begin{IEEEkeywords}
Sparse coding, transform learning, sparse auto-encoders, square root lasso.
\end{IEEEkeywords}

\section{Introduction} \label{sec:Introduction}
\IEEEPARstart{A}{}
sparse auto-encoder is a type of artificial neural network that learns efficient data encodings through unsupervised learning \cite{Goodfellow2016DL}. The purpose of an auto-encoder is to capture the most important elements of the input to learn a lower dimensional representation for higher dimensional data, such as images \cite{2010denoisingae}. It is commonly used for dimensionality reduction or feature extraction. The sparse auto-encoder architecture consists of two modules: an encoder and a decoder. The encoder compresses the input data into an encoded representation in a different domain, which is forced to be sparse. It then processes and filters the encoded data representations so that only the most important information is allowed through, preventing the model from memorizing the inputs and overfitting. The final module of the network, the decoder, decompresses the extracted sparse representations and reconstructs the data from its encoded state back to its original domain.

Interestingly, the observation that natural data can often be accurately approximated by sparse signals has been a prominent framework over the last twenty years \cite{Romano2015Denoising, Yang2010Image, Papyan2018Theory}. 
\rev{Specifically, the transform model \cite{SparseTransforms}---a generalized analysis sparse representation model---assumes that a signal $x\in\bR^n$ has a sparse representation $z^*\in\bR^d$ over a particular transformation $W \in \bR^{d \times n}$ to another domain, i.e.,
\begin{equation} \label{eq:transform}
    W x = z^*, \text{ where } \|z^*\|_0 \ll d,
\end{equation}
where $\|\cdot\|_0$ counts the number of nonzero elements of a vector.
This representation is typically a higher dimensional signal, i.e., $d \geq n$, and this is the setting that we assume in this work.
When $n = d$ and $W$ is of full rank, the transformation forms a basis, whereas when $d > n$, the transform is considered to be overcomplete. In situations where we observe a noisy version $y$ of the clean signal $x$, corrupted by additive noise, the equation becomes
\begin{equation*}
    W y = z^* + e,
\end{equation*}
where $e$ denotes the error or residual in the transform domain.
}

In this context, the task of finding the sparse representation of a signal given $W$ is called sparse coding. The first module of a sparse auto-encoder---the encoder---can be formally written as a transform sparse coding problem:
\begin{equation} \label{eq:encoder}
    \hat z = \argmin_z \frac{1}{2} \|z - W y\|^2_2 + \lambda \|z\|_1,
\end{equation}
where $\hat z$ is the estimated latent space representation, $W$ is a known transformation, and $\lambda \in \bR_+$ is a hyperparameter. The optimization problem in \eqref{eq:encoder} minimizes the transform residual $e$ with a sparse prior for $z$ to estimate $z^*$. \rev{The parameter $\lambda$ governs the trade-off between sparsity and residual error. Observe that the optimal selection of $\lambda$ is dependent on $e$ since  $z^* - W y = e$; hence, the value of $\lambda$ ought to be calibrated to increase with the magnitude of $e$.}

The closed form solution to the encoding problem \eqref{eq:encoder} is
\begin{equation*}
    \hat z = \text{prox}_{\lambda \|z\|_1} \big( W y \big) = S_{\lambda} \big(W y \big),
\end{equation*}
where $\text{prox}_f(v) = \argmin_x (f(x) + \tfrac{1}{2} \norm{x-v}^2)$ is the proximal operator and $S_\lambda$ is the soft-thresholding operator $S_\lambda(x)=\text{sign}(x)\text{max}(|x|-\lambda, 0)$. If we add the assumption of non-negativity of $z^*$, the solution can be rewritten as
\begin{equation*} \label{eq:relu}
    \hat z = \ReLU(W y - \lambda),
\end{equation*}
where $\ReLU(x)=\text{max}(x,0)$. From this expression we understand the influence of $\lambda$ on the solution and its role in filtering perturbations. Higher values of $\lambda$ lead to lower and sparser solutions. \rev{Therefore, as mentioned earlier, the optimal value of $\lambda$ is a function of the noise; the stronger the noise, the larger $\lambda$ must be.}

The decoder can also be written as an inverse operation following the same model. Formally, given $W$ and $\hat z$, we can obtain a least squares approximation to the true signal $x$ by minimizing $\|W x - \hat z\|^2_2$ with respect to $x$. Thus, the recovered signal is
\begin{equation*}
    \hat x = W^+ \hat z, 
\end{equation*}
where $W^+$ is the pseudoinverse of $W$. In particular, if $W$ has full column rank, $W^+ = (W^{\mathsf{T}} W)^{-1} W^{\mathsf{T}}$.

The connection between the transform model \eqref{eq:transform} and sparse auto-encoders is clear. The linear transformation $W$ represents the weights of any combination of linear layers, including convolutional operations, and $\lambda$ is the bias parameter; and both are trainable parameters of the network. This connection has been successfully applied to numerous computer vision and image processing tasks \cite{ravishankar2015mri}. When adapted to deep learning, it has demonstrated state-of-the-art performance in various machine learning applications, such as image classification~\cite{maggu2018CTL}, online video denoising~\cite{wen2018vidosat}, semantic segmentation of images~\cite{wen2017Frist}, super-resolution~\cite{gigie2021joint}, clustering~\cite{maggu2020DCTL}, and others.

\rev{The main problem of the presented encoder algorithm~\eqref{eq:encoder} is the bias parameter $\lambda$. The optimal selection of $\lambda$ is influenced by the noise, which means its ideal value varies with differing noise levels. Indeed, the optimality conditions for the correct estimation of the sparse representation $z^*$ are not guaranteed at different noise intensities, even for a known $W$. In other words, the model's performance may deteriorate significantly if the noise level at test time is different from the one used during training~\cite{mohan2019robust}.} Therefore, the network must be re-trained, and a different bias must be learned \emph{for each noise level} \cite{lecouat2019Interpretable}. Moreover, in an environment where the noise is unknown, as in most practical cases, finding the best bias becomes infeasible: it is impossible to choose the correct bias without estimating the noise level.

\textbf{Our contribution.}
To overcome the dependence of the bias $\lambda$ on the noise level, we draw inspiration from the square root lasso problem, introduced by \cite{Belloni2011sqrt} and detailed in \Cref{subsec:synthesis}, and propose a modification of the transform sparse coding algorithm for the encoder module
\begin{equation} \label{eq:our}
    \hat z = \argmin_z \|z - W y\|_2 + \lambda \|z\|_1.
\end{equation}
Notice that the residual term is no longer quadratic. The main advantage of \eqref{eq:our} is that the hyperparameter $\lambda$ is now pivotal to the noise energy. In other words, \textbf{the optimal choice of $\lambda$ is independent of the noise level}, which is difficult to estimate reliably because it is an ill-posed problem \cite{Naka2019NoiseEstimation}. We prove in \Cref{sec:Theory} that this property holds in the presence of both bounded noise and additive Gaussian noise. This stands in sharp contrast to vanilla sparse auto-encoders, where one needs to know the true standard deviation of the noise to fit the bias of the original transform sparse coding problem~\eqref{eq:encoder}.

\rev{
Furthermore, we propose an efficient and differentiable algorithm to solve the new pivotal sparse coding problem based on proximal gradient descent, as described in \Cref{sec:Algo}. Our algorithm is differentiable in the sense that it is compatible with gradient-based optimization techniques, enabling the minimization of a cost function through methods such as automatic differentiation and backpropagation. This leads to the development of a novel non-linear function called Self Normalizing ReLU (NeLU), which easily integrates into common neural network architectures. In \Cref{sec:Exp}, we conduct numerical experiments using both synthetic and real data to illustrate how our approach is significantly more resilient to various noise levels.
}

\section{Related work}
\subsection{Synthesis sparse modeling of signals} \label{subsec:synthesis}
\rev{
We begin by introducing a framework parallel to the analysis sparse representation model, presented in \Cref{sec:Introduction}, named the synthesis model. This model serves as the premise for introducing the square root lasso algorithm, which we will extend to the transform model.} The synthesis model assumes that a signal $x\in\bR^n$ can be represented as a linear combination of a few columns, called atoms, from a matrix $D\in\bR^{n \times d}$ named dictionary. In plain language, the signal corresponds to a multiplication of a dictionary by a sparse vector $z^*\in\bR^d$, i.e.,
\begin{equation*}
    x = D z^*.   
\end{equation*}

Various algorithms have been proposed to implement the sparse coding task of estimating $z^*$ given $x$ \cite{2006sparsecoding}. These include the matching pursuit~\cite{Mallat2013MP} and the basis pursuit~\cite{Chen2001BP}, also called lasso in the statistics literature~\cite{Tibshirani1996Lasso}
\begin{equation} \label{eq:lasso}
    \min_z \frac{1}{2n}\|y-D z\|^2_2 + \lambda_{L} \|z\|_1.
\end{equation}
\rev{As with \eqref{eq:encoder}, the primary drawback of lasso is that the optimal value of the parameter $\lambda_L$ is dependent on the noise level and, therefore, must be adjusted for each specific noise level.} For example, in a Gaussian noise environment, $\lambda_L$ proportional to $\sigma \sqrt{\log d/n}$ is minimax optimal for signal reconstruction, $\|\hat{x}-x\|_2$, in high dimensions~\cite{li2020fast}. Therefore, to achieve a good estimation of $z^*$ or prediction of $x$ for an unknown noise level, one must estimate $\sigma$.

\textbf{Square root lasso.}
A modified version of the lasso has been proposed to solve the dependence of $\lambda$ on noise power, called square root lasso~\cite{Belloni2011sqrt}
\begin{equation} \label{eq:sqrt_lasso}
    \min_z \frac{1}{\sqrt{n}}\|y- D z\|_2 + \lambda_{\sqrt{L}} \|z\|_1,
\end{equation}
which takes the square root of the error term of \eqref{eq:lasso}. Belloni et al.~\cite{Belloni2011sqrt} have proven that the square root lasso achieves minimax optimality of estimation and signal reconstruction error for a hyperparameter $\lambda_{\sqrt{L}}$, which is pivotal to the noise level. For instance, in the case of Gaussian noise, the minimax optimal $\lambda_{\sqrt{L}}$ is proportional to $\sqrt{\log d / n}$, which is independent of $\sigma$ and leads to a constant parameter for all Gaussian distributions. Moreover, numerous algorithms have been developed to efficiently solve the problem based on its convex property~\cite{li2020fast}.

Although the square root lasso is powerful and attractive, it has never been applied in the context of dictionary learning or adapted to form a novel neural network architecture. In this work, we draw an exciting connection between the square root lasso and transform modeling. The extension to synthesis model and dictionary learning is a direct consequence of the analysis of the transform learning since \eqref{eq:our} can be seen as a particular case of \eqref{eq:sqrt_lasso}, where $D=I$ and the input signal is $W y$. Further details are provided in Appendix \ref{App:synthesis}.

\subsection{Transform learning}
\rev{
As elaborated in \Cref{sec:Introduction}, the transform model and sparse auto-encoders are tightly connected. The forward pass in sparse encoders essentially acts as a sparse representation pursuit within the transform model. In this framework, the transformation is characterized by the weights within a set of layers, which may include convolutional ones. Thus, the transformation is also learned during the model training process. This method of deriving the transformation directly from the data is known as transform learning.
}

\rev{
At its core, transform learning~\cite{Bresler2015TL} employs a data-driven feature extractor to transform input data into a suitable representation. This approach can improve upon the limited ability of analytical transformation methods, such as wavelets, to handle data. As a result, transform learning often yields superior restoration performance compared to the analytical approaches~\cite{SparseTransforms}.
}

Transform learning is comparable to dictionary learning from an analysis perspective \cite{aberdam2019holistic}. In dictionary learning, a basis $D$ is trained to recover the data, $x = Dz^*$, from the representation $z^*$ \cite{2003dictionary}. In contrast, transform learning aims to learn a transformation $W$ to generate the representation $z^*$. Since the transformation $W$ is data-dependent and the output representation $z^*$ is unknown, jointly learning both is a challenging task.

In this work, we propose a new learning problem that combines the transform model with deep learning through the interpretable framework of transform learning. Our objective is to demonstrate that our method can exhibit the interesting properties of the classical problem established in \Cref{sec:Theory}. We extend our algorithm to the transform learning framework and demonstrate its effectiveness in enhancing the robustness of deep learning through experiments in \Cref{sec:Exp}.

\subsection{Blind denoising networks}
Blind denoising is the task of removing noise from an input signal when the noise magnitude is unknown at test time. This task is closely related to our objective. Several methods have been proposed to tackle this task, with the most common approach for blind denoising being to estimate the noise distribution (or simply the noise level) to identify and remove it from the signal~\cite{zhang2018FFD, Zhussip2019deep, chen2020blind}. However, this method lacks flexibility because it requires learning different weights for each noise level and solving the difficult problem of noise estimation.

In contrast to this approach, where all weights are learned from scratch for each noise level, some existing methods recognize that not all weights depend on the noise and only adjust the regularization parameters, such as the bias in the case of auto-encoders, for different noise levels \cite{lecouat2019Interpretable}. This method reduces the overall number of network parameters to be learned, but they still need to be adjusted for each noise level.

Another standard technique, which emerged alongside the advancement of deep learning in computer vision tasks, is to train one model across a wide range of expected noise levels~\cite{zhang2017Gaussian}. In this case, the denoising performance of such a model is generally inferior compared to a model trained for a specific noise distribution~\cite{2020OneSizeFitsAll}. Moreover, it has been shown that a model trained using this method tends to focus on the average noise level of the training range, rather than learning generalizable weights for all noise levels. To address this issue, Gnansambandam et al.~\cite{2020OneSizeFitsAll} proposed determining the optimal noise training sample distribution from a minimax risk optimization perspective. The approach proposed in \cite{2020OneSizeFitsAll} is orthogonal to ours, as it does not suggest modifying the network architecture but instead focuses solely on the training strategy.

In this work, we propose a new architecture for implementing \eqref{eq:our} that is inherently noise level independent. Our theoretical study, presented in \Cref{sec:Theory}, shows that the same model parameters achieve high-quality signal recovery across all noise levels when learned correctly. Indeed, our experimental results indicate that a single neural network can be trained and practically applied to handle all noise levels, without re-training or updating the bias term.
\section{Theory} \label{sec:Theory}
We begin by formally restating the transform model from \Cref{sec:Introduction}. Specifically, we consider a sparse linear model in high dimensions for a noisy signal
\begin{equation*}
    y = x + \xi,
\end{equation*}
where $x \in \bR^n$ is the clean sparsifiable signal and $\xi \in \bR^n$ denotes the random additive noise. Thus, applying a given transformation $W$ to $y$ yields
\begin{equation*}
    W y = W(x + \xi) = W x + W \xi = z^* + e.
\end{equation*}
Here, the goal is to recover the clean sparse representation $z^* = W x$, while $e = W \xi$ is the error.

In the following subsections, we demonstrate that the desired properties of the square root lasso also hold for the sparse encoding problem \eqref{eq:our}, \rev{given a known transform $W$}. Specifically, the parameter $\lambda$ is pivotal to the noise level, and as a result, not only can the solution of \eqref{eq:our} be computed efficiently, but all parameters of the problem are also independent of the noise level. We prove that the recovery of the correct support, i.e., the group of nonzero elements $\{i \in [d] : |z^*_i| > 0 \}$, and the bound on the estimation error, $\|\hat z - z^*\|_2$, can be extended to our proposed new optimization problem under the presence of bounded noise.

\subsection{Support recovery} \label{subsec:support}
First, we prove the recovery of the correct support in the presence of bounded noise, a prevalent scenario in the robustness literature~\cite{Tempo1988Bounded, romano2020adversarial}. Subsequently, we extend these results to Gaussian noise within a probabilistic setting. To prove these results, the following assumptions are necessary.

\begin{assumption} \label{ass:bound_noise}
The noise $\norm{\xi}_2$ is bounded.
\end{assumption}

\noindent Consequently, $\norm{e}_2$ is also bounded since
$$
\norm{e}_2 \leq s_{\max} \norm{\xi}_2 \eqdef \epsilon ,
$$
where $s_{\max}$ is the largest singular value of $W$.

\begin{assumption} \label{ass:eta}
\rev{There exists a minimal positive number $\eta$ that satisfies the condition}
$$
\lambda \|z^*\|_1 \leq \eta \epsilon.
$$
\end{assumption}

\noindent The constant $\eta$ represents the ratio between the regularization term $\lambda \|z^*\|_1$ and the noise threshold $\epsilon$, which is the value of the reconstruction error term when $\hat z = z^*$. A smaller value of $\eta$ implies that the regularization term is proportionally smaller, leading to reduced shrinkage and potentially more accurate signal recovery.

\begin{theorem} \label{Thm:supp}
Let Assumption \ref{ass:bound_noise} be satisfied, let $\eta$ satisfy Assumption \ref{ass:eta}. Then, for
\begin{equation*}
    \lambda = \frac{\norm{e}_{\infty}}{\norm{e}_2},
\end{equation*}
and $\hat z$ be the solution to \eqref{eq:our}, we get that
\begin{equation*}
    \|{\hat z - z^*}\|_\infty \leq \lambda (2 + \eta) \epsilon.
\end{equation*}
Moreover, if
\begin{equation*}
    \min_{j \in [d]} |z_j^*| > 2 \lambda (2 + \eta) \epsilon,
\end{equation*}
then the estimated support
\begin{equation*} 
    \hat \cS = \{j \in [d] : |\hat z_j| > \lambda (2 + \eta) \epsilon \}
\end{equation*}
recovers the true sparsity pattern $\cS = \{j \in [d] : |z_j| > 0 \}$ correctly, i.e.,
\begin{equation*}
    \hat \cS = \cS.
\end{equation*}
\end{theorem}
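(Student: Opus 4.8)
The plan is to combine a one-line comparison of the objective value in \eqref{eq:our} at the minimizer $\hat z$ and at the ground truth $z^*$ with the first-order (subgradient) optimality conditions, which is precisely where the pivotal choice $\lambda = \norm{e}_\infty/\norm{e}_2$ does its work. \textbf{First}, since $\hat z$ is optimal and $z^*$ is feasible,
$$
\norm{\hat z - Wy}_2 + \lambda\norm{\hat z}_1 \;\le\; \norm{z^* - Wy}_2 + \lambda\norm{z^*}_1 \;=\; \norm{e}_2 + \lambda\norm{z^*}_1 \;\le\; \epsilon + \eta\epsilon,
$$
using $Wy = z^* + e$, Assumption \ref{ass:bound_noise} ($\norm{e}_2 \le \epsilon$), and Assumption \ref{ass:eta}. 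Dropping the nonnegative term $\lambda\norm{\hat z}_1$ gives the residual energy bound $\norm{\hat z - Wy}_2 \le (1+\eta)\epsilon$.

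\textbf{Second}, I would convert this $\ell_2$ bound into an $\ell_\infty$ bound via optimality. In the non-degenerate case $\hat z \ne Wy$, the stationarity condition of \eqref{eq:our} reads $\frac{\hat z - Wy}{\norm{\hat z - Wy}_2} + \lambda g = 0$ for some $g \in \partial\norm{\hat z}_1$, i.e. $\hat z - Wy = -\lambda\,\norm{\hat z - Wy}_2\,g$; since every subgradient of the $\ell_1$ norm obeys $\norm{g}_\infty \le 1$, taking $\ell_\infty$ norms yields the key estimate $\norm{\hat z - Wy}_\infty \le \lambda\,\norm{\hat z - Wy}_2$, which also holds trivially when $\hat z = Wy$. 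Then, writing $\hat z - z^* = (\hat z - Wy) + e$ and using the triangle inequality together with the identity $\norm{e}_\infty = \lambda\norm{e}_2 \le \lambda\epsilon$ (immediate from the definition of $\lambda$ and Assumption \ref{ass:bound_noise}),
$$
\norm{\hat z - z^*}_\infty \;\le\; \norm{\hat z - Wy}_\infty + \norm{e}_\infty \;\le\; \lambda\norm{\hat z - Wy}_2 + \lambda\epsilon \;\le\; \lambda(1+\eta)\epsilon + \lambda\epsilon \;=\; \lambda(2+\eta)\epsilon,
$$
which is the first claim.

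\textbf{Third}, support recovery follows from the $\ell_\infty$ bound by the standard minimum-magnitude argument. Set $\tau \eqdef \lambda(2+\eta)\epsilon$, so $\norm{\hat z - z^*}_\infty \le \tau$. If $j \in \cS$, then $|\hat z_j| \ge |z^*_j| - \tau > 2\tau - \tau = \tau$ by the assumed lower bound on $\min_{j\in\cS}|z^*_j|$, hence $j \in \hat\cS$; if $j \notin \cS$, then $z^*_j = 0$, so $|\hat z_j| \le \tau$ and $j \notin \hat\cS$. Therefore $\hat\cS = \cS$.

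\textbf{Main obstacle.} The proof is short, so the only point requiring care is the non-smoothness of $z \mapsto \norm{z - Wy}_2$ at $z = Wy$, which makes the stationarity condition in the second step a subdifferential inclusion rather than a gradient equation; this is dispatched by observing that the required $\ell_\infty$ estimate holds trivially in that degenerate case. The conceptual crux — easy to miss — is that one must exploit the \emph{direction} of the optimality condition, not merely $\norm{\cdot}_\infty \le \norm{\cdot}_2$: it is the resulting factor $\lambda \le 1$ along the residual that makes the stated bound of the correct order and ties the whole result to the pivotal choice of $\lambda$.
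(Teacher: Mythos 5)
Your proposal is correct and follows essentially the same route as the paper: compare objective values at $\hat z$ and $z^*$ to bound $\|\hat z - Wy\|_2 \le (1+\eta)\epsilon$, use the subgradient optimality condition to get $\|\hat z - Wy\|_\infty \le \lambda\|\hat z - Wy\|_2$, and combine via the triangle inequality with $\|e\|_\infty = \lambda\|e\|_2$. The only difference is that you spell out the KKT inclusion and the minimum-magnitude support-recovery argument, which the paper states without detail or delegates to a citation (Theorem 4.1 of Lounici et al.).
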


\begin{proof} [Proof of \Cref{Thm:supp}]
Our derivation is inspired by the one in~\cite{massias2020support}. For the solution $\hat z$ of \eqref{eq:our}, we have:
\begin{align} \label{eq:bound_delta}
    \norm{\hat z - z^*}_\infty
    &= \norm{\hat e - e}_{\infty} \qquad \left(\hat e = W y - \hat z \right) \nonumber \\
    & \leq
    \norm{\hat e}_{\infty} + \norm{e}_{\infty} \nonumber \\
    & \leq
    \norm{\hat e}_{\infty} + \lambda \epsilon.
\end{align}
We bound $\norm{\hat e}_{\infty}$ using KKT sub-gradient optimality conditions,
\begin{align} \label{eq:kkt_sqrt}
    \norm{\hat e}_{\infty}
    &\leq \lambda \norm{\hat e}_2.
\end{align}
It now remains to bound $\norm{\hat e}_2$, which is done with \Cref{ass:eta}.
By minimality of the estimator,
\begin{align} \label{eq:link_eps}
    \norm{\hat e}_2 + \lambda \norm{\hat z}_{1}
    &\leq
    \norm{e}_2 + \lambda \norm{z^*}_{1} \nonumber \\
    \norm{\hat e}_2
    &\leq
    \norm{e}_2 + \lambda \norm{z^*}_{1} \nonumber \\
    &\leq
    \epsilon + \eta \epsilon. 
\end{align}
Combining equations \eqref{eq:bound_delta}, \eqref{eq:kkt_sqrt} and \eqref{eq:link_eps}, we get:
\begin{gather*}
    \norm{\hat z - z^*}_\infty \leq
    \norm{\hat e}_{\infty} + \lambda \epsilon \leq
    \lambda (\eta+1) \epsilon + \lambda \epsilon.
\end{gather*}
This proves the bound on $\norm{\hat z - z^*}_{\infty}$.
Finally, the correct support recovery follows directly from Theorem 4.1 in~\cite{lounici2009taking}.
\end{proof}

\begin{remark}
It is essential to highlight that $\lambda$ maintains the same value across different noise levels.
\rev{For example, let $\xi_1$ be a realization from a standard normal distribution $\cN(0, I)$ and define $\xi_2 = \sigma \xi_1$, for any standard deviation $\sigma \in \bR_+$.
Consequently, we obtain}
\begin{equation*}
    \lambda = \frac{\norm{e}_{\infty}}{\norm{e}_2} = \frac{\norm{W \xi_1}_{\infty}}{\norm{W \xi_1}_2} = \frac{\norm{W \xi_2}_{\infty}}{\norm{W \xi_2}_2}.
\end{equation*}
This observation underscores the importance of our choice of $\lambda$, as it is independent of $\sigma$ and remains consistent across various noise levels.
\end{remark}

On a different note, we can deduce that the correct support can be fully recovered if the signal-to-noise ratio is sufficiently high, as formally stated in~\Cref{Thm:supp}. Furthermore, $\lambda$ can be conveniently bounded in all cases by
\begin{equation*}
    \frac{1}{\sqrt{n}} \leq \lambda = \frac{\norm{e}_{\infty}}{\norm{e}_2} \leq 1 .
\end{equation*}
A bound for $\lambda$ can be used to guarantee that no false positive support error occurs. Improved bounds can be achieved with additional assumptions on the noise. In fact, we investigate this aspect for the Gaussian case in \Cref{subsec:gaussian}.

\subsection{Estimation error}
We now proceed to show that the estimation error, $\|\hat z - z^*\|_2$, can also be bounded with the same choice of parameter $\lambda$, under the same Assumptions \ref{ass:bound_noise} and \ref{ass:eta}.

\begin{theorem}\label{Thm:sqrt}
Let \Cref{ass:bound_noise} be satisfied and let $\eta$ satisfy \Cref{ass:eta}. Then, for $\hat z$, the solution to \eqref{eq:our}, we get
\begin{equation*}
    \|\hat z - z^*\|_2 \leq (2+ \eta) \epsilon.
\end{equation*}
\end{theorem}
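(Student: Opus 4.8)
The plan is to reuse almost entirely the machinery already set up in the proof of \Cref{Thm:supp}, so the argument will be short. The key observation is that the error $\hat z - z^*$ is exactly the gap between the two transform-domain residuals: keeping the notation $\hat e = W y - \hat z$ and $e = W y - z^*$, we have $\hat z - z^* = e - \hat e$. Hence, by the triangle inequality,
\[
    \norm{\hat z - z^*}_2 \leq \norm{e}_2 + \norm{\hat e}_2 .
\]
So it suffices to control the two terms on the right separately.

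First I would bound $\norm{e}_2$ directly: by \Cref{ass:bound_noise} and the subsequent observation that $\norm{e}_2 \leq s_{\max}\norm{\xi}_2 = \epsilon$, we get $\norm{e}_2 \leq \epsilon$. For the second term I would invoke exactly the minimality estimate already derived in \eqref{eq:link_eps}: since $z^*$ is feasible for \eqref{eq:our} while $\hat z$ is its minimizer, $\norm{\hat e}_2 + \lambda\norm{\hat z}_1 \leq \norm{e}_2 + \lambda\norm{z^*}_1$; dropping the nonnegative term $\lambda\norm{\hat z}_1$ and then applying \Cref{ass:bound_noise} and \Cref{ass:eta} yields $\norm{\hat e}_2 \leq \epsilon + \eta\epsilon = (1+\eta)\epsilon$.

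Combining the three displays gives $\norm{\hat z - z^*}_2 \leq \epsilon + (1+\eta)\epsilon = (2+\eta)\epsilon$, which is the claim. I do not anticipate a genuine obstacle here; the only point worth flagging is that, in contrast with \Cref{Thm:supp}, this bound uses neither the KKT/$\ell_\infty$ step \eqref{eq:kkt_sqrt} nor the specific choice $\lambda = \norm{e}_\infty/\norm{e}_2$ — it holds for every $\lambda$ compatible with \Cref{ass:eta}, and its entire content is feasibility of $z^*$ together with a single triangle inequality.
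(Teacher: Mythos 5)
Your proof is correct and is essentially the paper's own argument in a slightly reordered form: the paper applies the reverse triangle inequality to the minimality inequality $\norm{\hat e}_2 + \lambda\norm{\hat z}_1 \leq \norm{e}_2 + \lambda\norm{z^*}_1$, which is exactly your decomposition $\norm{\hat z - z^*}_2 \leq \norm{e}_2 + \norm{\hat e}_2$ followed by the bound $\norm{\hat e}_2 \leq (1+\eta)\epsilon$ from \eqref{eq:link_eps}. Your closing observation that neither the KKT step \eqref{eq:kkt_sqrt} nor the specific choice of $\lambda$ is needed is also consistent with the paper's proof.
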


\begin{proof}[Proof of \Cref{Thm:sqrt}]
From the optimality of the solution:
\begin{equation*}
    \|\hat z - W y\|_2 +\lambda\|\hat z\|_1 \leq \|z^* - W y\|_2 + \lambda\|z^*\|_1.
\end{equation*}
Using $W y = z^* + e$, and applying the reverse triangle inequality, we get
\begin{align*}
    \|\hat z - z^* - e\|_2 &\leq \|e\|_2 +\lambda\|z^*\|_1 - \lambda\|\hat z\|_1 \\
    \|\hat z - z^* \|_2 - \|e\|_2 &\leq \|e\|_2 +\lambda\|z^*\|_1 - \lambda\|\hat z\|_1 \\
    \|\hat z - z^* \|_2 &\leq 2\|e\|_2 + \lambda \left( \|z^*\|_1 - \|\hat z\|_1 \right).
\end{align*}
Finally, using Assumptions \ref{ass:bound_noise} and \ref{ass:eta}, we have
\begin{equation*}
    \|\hat z - z^*\| \leq 2\|e\| + \lambda\|z^*\|_1 \leq (2+ \eta) \epsilon,
\end{equation*}
which concludes the proof.
\end{proof}

In Appendix \ref{App:bound}, we present an empirical validation of \Cref{Thm:sqrt}.

\subsection{Gaussian noise} \label{subsec:gaussian}
We now extend the results of \Cref{subsec:support} to the Gaussian case. The key point of this analysis is that we can use practical values for $\lambda$, which can be computed independently of the noise level. We show that these $\lambda$ values are suitable for any additive Gaussian noise in the input and are thus pivotal to its standard deviation $\sigma$.

\begin{assumption}\label{ass:gauss_noise}
The entries of $\xi$ are \iid $\cN(0,\sigma^2)$ random variables.
\end{assumption}

\begin{assumption}\label{ass:norm_w}
The rows of $W$ are normalized to unit $\ell_2$ norm.
\end{assumption}

\begin{theorem} \label{Thm:supp_gauss}
Let \Cref{ass:gauss_noise} and \ref{ass:norm_w} be satisfied, let $\eta$ satisfy \Cref{ass:eta}. Then, set
\begin{equation*}
    \lambda = a \sqrt{ \frac{2 \log d}{n}} ,
\end{equation*}
where $a \geq 2\sqrt{2}$ is a constant.

With probability at least $1 - 2 d^{1- a^2/8} - (1+e^2)e^{-n/24}$, we have
\begin{equation*}
    \norm{\hat z - z^*}_{\infty} \leq \lambda (2 + 2 \eta + \tfrac{1}{s_{\min}}) \sqrt{n} s_{\max} \sigma ,
\end{equation*}
where $s_{\min}$ and $s_{\max}$ are the minimum and maximum
singular values of $W$, respectively.
\end{theorem}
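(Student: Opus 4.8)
The plan is to follow the proof of \Cref{Thm:supp} essentially verbatim on the deterministic side, and to replace the exact identity $\lambda = \norm{e}_\infty/\norm{e}_2$ used there by high-probability control of the Gaussian noise, so that the fixed choice $\lambda = a\sqrt{2\log d/n}$ plays the same role. Writing $\hat e = Wy - \hat z$, I would begin from $\norm{\hat z - z^*}_\infty = \norm{\hat e - e}_\infty \leq \norm{\hat e}_\infty + \norm{e}_\infty$, then use the same KKT subgradient optimality condition $\norm{\hat e}_\infty \leq \lambda\norm{\hat e}_2$ (valid for any $\lambda$, because stationarity forces $\hat e/\norm{\hat e}_2$ to be $\lambda$ times a subgradient of $\norm{\cdot}_1$ at $\hat z$, whose coordinates lie in $[-1,1]$), and finally $\norm{\hat e}_2 \leq \norm{e}_2 + \lambda\norm{z^*}_1 \leq \norm{e}_2 + \eta\epsilon$ by minimality of $\hat z$ together with \Cref{ass:eta}, exactly as in \eqref{eq:kkt_sqrt}--\eqref{eq:link_eps}. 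This reduces the statement to two stochastic estimates.

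The first is an $\ell_\infty$ bound on $e = W\xi$. By \Cref{ass:norm_w} every coordinate of $e$ is an $\cN(0,\sigma^2)$ variable, so a Gaussian tail bound and a union bound over the $d$ coordinates give $\norm{e}_\infty \leq \tfrac{a}{2}\sigma\sqrt{\log d}$ except on an event of probability at most $2d^{1-a^2/8}$; substituting $\lambda = a\sqrt{2\log d/n}$ rewrites this as $\norm{e}_\infty \leq \tfrac{1}{2\sqrt2}\lambda\sqrt n\,\sigma \leq \tfrac{1}{s_{\min}}\lambda\sqrt n\, s_{\max}\sigma$, using $s_{\max}/s_{\min}\geq 1$. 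The second is concentration of $\norm{\xi}_2$: standard $\chi^2$ (equivalently, Gaussian-norm) concentration gives $\norm{\xi}_2 \leq 2\sqrt n\,\sigma$ outside an event of probability at most $(1+e^2)e^{-n/24}$, hence $\norm{e}_2 \leq s_{\max}\norm{\xi}_2 \leq 2\sqrt n\, s_{\max}\sigma$ and $\epsilon = s_{\max}\norm{\xi}_2 \leq 2\sqrt n\, s_{\max}\sigma$.

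Putting these together on the intersection of the two good events, whose complement has probability at most $2d^{1-a^2/8} + (1+e^2)e^{-n/24}$ by a union bound, I get $\norm{\hat e}_2 \leq (1+\eta)\epsilon \leq 2(1+\eta)\sqrt n\, s_{\max}\sigma$, therefore $\norm{\hat e}_\infty \leq \lambda\norm{\hat e}_2 \leq (2+2\eta)\lambda\sqrt n\, s_{\max}\sigma$, and adding the $\norm{e}_\infty$ bound yields precisely $\norm{\hat z - z^*}_\infty \leq \lambda\,(2 + 2\eta + \tfrac{1}{s_{\min}})\sqrt n\, s_{\max}\sigma$, as claimed.

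Since the algebraic skeleton is inherited from \Cref{Thm:supp}, the real effort — and the main obstacle — is the probabilistic bookkeeping: selecting tail inequalities whose exponents and prefactors land exactly on $2d^{1-a^2/8}$ and $(1+e^2)e^{-n/24}$ (in particular producing the $a^2/8$ from $a\geq 2\sqrt2$), and then being suitably generous when re-expressing $\sigma\sqrt{\log d}$ and $\epsilon$ in the target units $\lambda\sqrt n\, s_{\max}\sigma$ so that the three contributions collapse into the single constant $2+2\eta+1/s_{\min}$. One should also dispatch the degenerate case $\hat e = 0$ separately — there $\norm{\hat z - z^*}_\infty = \norm{e}_\infty$ is already controlled — and note the mild abuse that $\epsilon$, introduced via \Cref{ass:bound_noise}, is now the random quantity $s_{\max}\norm{\xi}_2$, so that \Cref{ass:eta} is applied on the high-probability event.
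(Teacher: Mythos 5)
Your proposal is correct and follows essentially the same route as the paper: the deterministic chain $\norm{\hat z - z^*}_\infty \leq \norm{\hat e}_\infty + \norm{e}_\infty$, the KKT bound $\norm{\hat e}_\infty \leq \lambda \norm{\hat e}_2$, and the minimality bound $\norm{\hat e}_2 \leq \norm{e}_2 + \eta\epsilon$ are exactly those of the paper, and your two stochastic estimates are precisely parts \ref{lem:event_sqrt_lasso} and \ref{lem:event_chi_square} of \Cref{lem:concentration}. The only cosmetic difference is that you bound $\norm{e}_\infty$ directly (getting $\tfrac{1}{2\sqrt{2}}\lambda\sqrt{n}\sigma$ and then generously inflating to $\tfrac{s_{\max}}{s_{\min}}\lambda\sqrt{n}\sigma$) whereas the paper routes through the ratio event $\norm{e}_\infty/\norm{e}_2 \leq \lambda/(2s_{\min})$; both land on the same constant.
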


\begin{proof}[Proof of \Cref{Thm:supp_gauss}]
Let $\cA$ be the event
\begin{equation*}
    \cA = \left\{ \tfrac{\norm{e}_{\infty}}{\norm{e}_2} \leq \tfrac{\lambda}{2 s_{\min}} \right\} \cap \left\{ s_{\min} \tfrac{\sigma}{\sqrt{2}} < \tfrac{\norm{e}_2}{\sqrt{n}} < 2 s_{\max} \sigma \right\}.
\end{equation*}
\rev{From \Cref{lem:concentration} presented in Appendix~\ref{App:lemmas}, it is deduced that $\bP(\cA) \geq 1 - 2 d^{1- a^2/8} - (1+e^2)e^{-n/24}$.} For our model and under the event $\cA$, we have
\begin{align} \label{eq:gaus_bound_delta}
    \norm{\hat z - z^*}_\infty \nonumber
    & \leq
    \norm{\hat e}_{\infty} + \norm{e}_{\infty} \nonumber \\
    & \leq
    \norm{\hat e}_{\infty}
    + \lambda \frac{s_{\max}}{s_{\min}} \sqrt{n} \sigma.
\end{align}
By minimality of the estimator and \Cref{ass:eta},
\begin{align} \label{eq:gau_f_sigma}
    \norm{\hat e}_2
    &\leq
    \norm{e}_2
    + \lambda \norm{z^*}_{1} \nonumber \\
    &\leq
    2 \sqrt{n} s_{\max} \sigma + 2  \eta \sqrt{n} s_{\max} \sigma .
\end{align}
Combining equations \eqref{eq:kkt_sqrt}, \eqref{eq:gaus_bound_delta} and \eqref{eq:gau_f_sigma}, we get:
\begin{gather*}
    \norm{\hat z - z^*}_\infty \leq
    \lambda \left( 2 \sqrt{n} s_{\max} \sigma + 2  \eta \sqrt{n} s_{\max} \sigma \right) + \lambda \frac{s_{\max}}{s_{\min}} \sqrt{n} \sigma \\
    \norm{\hat z - z^*}_{\infty} \leq \lambda (2 + 2 \eta + \tfrac{1}{s_{\min}}) \sqrt{n} s_{\max} \sigma .
\end{gather*}
\end{proof}

\section{Computational algorithm} \label{sec:Algo}
\subsection{An iterative solver}
In this section, we introduce an iterative optimization algorithm for minimizing \eqref{eq:our} that can be efficiently implemented and formulated as a novel sparse auto-encoder architecture. It is worth noting that this objective function corresponds to a convex optimization problem. Therefore, it inherits not only all the theoretical properties of convex optimization problems, but also the algorithms that can be used to solve it, such as the interior point method \cite{Wright2000IPM} or the alternating direction method of multipliers \cite{Boyd2011ADMM}.

In \cite{li2020fast}, the authors studied the geometric structure of the square root lasso problem and concluded that the $\ell_2$ loss function is non-differentiable only in extreme cases of overfitting. In practice, this situation is rare when the data are corrupted by noise and a sufficiently large regularization parameter $\lambda$ is used to produce a sparse solution. Consequently, the data fitting term in the objective function behaves as a strongly convex and smooth function.

\begin{algorithm}[htb]
\caption{Self Normalizing ReLU (NeLU)} \label{Alg:PGD}
    \textbf{Input:}
	\begin{algorithmic}
        \STATE $\bar y \leftarrow W y$, where $W$ is the transform and $y$ its input signal.
	\end{algorithmic}
    
    \textbf{Parameters:}
	\begin{algorithmic}
		\STATE $\lambda$ -- bias.
		\STATE $\beta$ -- step size.
	\end{algorithmic}
	
    \textbf{Output:}
	\begin{algorithmic}
 		\STATE The estimated representation $\hat z$.
	\end{algorithmic}

 	\textbf{Process:}
	\begin{algorithmic}
	    \STATE $\hat z \leftarrow 0$
		\STATE {While not converged:}
		\STATE  \qquad $\hat z \leftarrow S_{\beta\lambda} \left\{ \hat z - \beta\frac{\hat z - \bar y}{\|\hat z - \bar y\|_2} \right\}$
		\STATE \textbf{return} $\hat z$
	\end{algorithmic}
\end{algorithm}

Leveraging these attractive geometric properties, we can use proximal gradient descent to iteratively minimize \eqref{eq:our}. The theoretical analysis in \cite{li2020fast} shows that such an optimization algorithm achieves fast local linear convergence. The same theoretical justification also applies to \eqref{eq:our} since our optimization problem can be viewed as a simpler instance of the square root lasso, presented in \Cref{subsec:synthesis}. Therefore, we propose adapting proximal gradient descent to the problem studied here, as described in Algorithm \ref{Alg:PGD}, which we have named Self Normalizing ReLU or NeLU for short.

\rev{
The algorithm works by continuously refining the solution via iterative updates, progressing in the direction opposite to the gradient of the objective function. Each iterative update incorporates a proximal operator, which introduces a penalty term to the objective function, thereby promoting sparsity. In the specific problem at hand, the proximal operator takes the form of a soft-thresholding operator, as outlined in \Cref{sec:Introduction}. This operator performs a shrinkage operation on the variables, setting any values below a specified absolute threshold to zero. Importantly, the soft-thresholding operator can be replaced with ReLU to enforce nonnegative representations. The iterative process continues until the desired level of convergence is achieved.
}

\subsection{Transform learning}

We propose to adapt \Cref{Alg:PGD} for transform learning by unrolling the algorithm into a layered neural network architecture, following the approach presented in~\cite{Lista2010LeCun}. The idea is to unfold an iterative algorithm and construct it as a network architecture, mapping each iteration to a single operation and stacking a finite number of operations on top of each other. This approach enables the incorporation of a wide range of mathematical techniques into deep learning models~\cite{Yonina2020Unroll, 2020unfolding, Liu2021SGDnet}. Specifically, we achieve this unfolding by limiting the number of iterations in \Cref{Alg:PGD} to $N$ iterations. The resulting architecture is depicted in Figure \ref{fig:nelu}.

\begin{figure}[h]
\centering
\includegraphics[width=\linewidth]{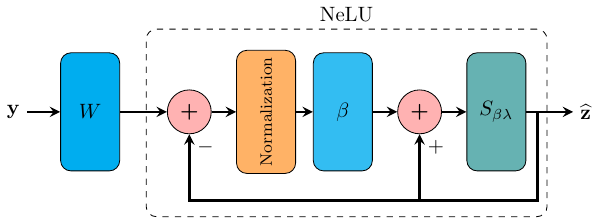}
\caption{The NeLU architecture: A recurrent sparse encoder model, unrolled for a predetermined number of iterations.}
\label{fig:nelu}
\end{figure}

In addition, we propose to improve the performance of the algorithm by employing Nesterov acceleration \cite{sutskever13momentum}. Nesterov acceleration is a variant of momentum that speeds up the convergence of gradient descent algorithms, and has demonstrated efficacy in various contexts. By incorporating it into \Cref{Alg:PGD}, we aim to achieve superior performance in transform learning tasks, based on the understanding that accelerated gradient descent converges faster and operates effectively with shallower networks, which are easier to train.

\begin{algorithm}[t]
\caption{Accelerated NeLU} \label{Alg:Nelu}
    \textbf{Input:}
	\begin{algorithmic}
        \STATE $\bar y \leftarrow W y$, where $W$ is the transform and $y$ its input signal.
	\end{algorithmic}

    \textbf{Learnable weights:}
	\begin{algorithmic}
		\STATE $\lambda$ -- bias.
		\STATE $\beta$ -- step size.
		\STATE $\alpha$ -- momentum factor.
	\end{algorithmic}

    \textbf{Hyperparameters:}
	\begin{algorithmic}
		\STATE $N$ -- number of iterations.
	\end{algorithmic}

    \textbf{Output:}
	\begin{algorithmic}
 		\STATE The estimated representation $\hat z$.
	\end{algorithmic}

	\textbf{Process:}
	\begin{algorithmic}
	    \STATE $\hat z \leftarrow 0$
	 	\STATE $v \leftarrow 0$   
		\STATE {For $i=1:N$}
		\STATE \qquad $v \leftarrow \alpha v - \beta \frac{\hat z + \alpha v - \bar y}{\|\hat z + \alpha v - \bar y\|_2}$
		\STATE \qquad $\hat z \leftarrow \rev{\ReLU \left(\hat z + v - \beta\lambda \right)}$
		\STATE \textbf{return} $\hat z$
	\end{algorithmic}
\end{algorithm}

Finally, we note that all parameters of the resulting algorithm, including $W$, $\lambda$, $\beta$, and $\alpha$, can be trained end-to-end. This means that the network can be trained on a dataset to learn the optimal values for these parameters, allowing it to perform well on various transform learning tasks. The final accelerated algorithm is presented in \Cref{Alg:Nelu}.

\begin{figure*}[hb]
\captionsetup[subfigure]{labelfont=scriptsize}
\centering
\subfloat[]{\includegraphics[width=0.4\linewidth]{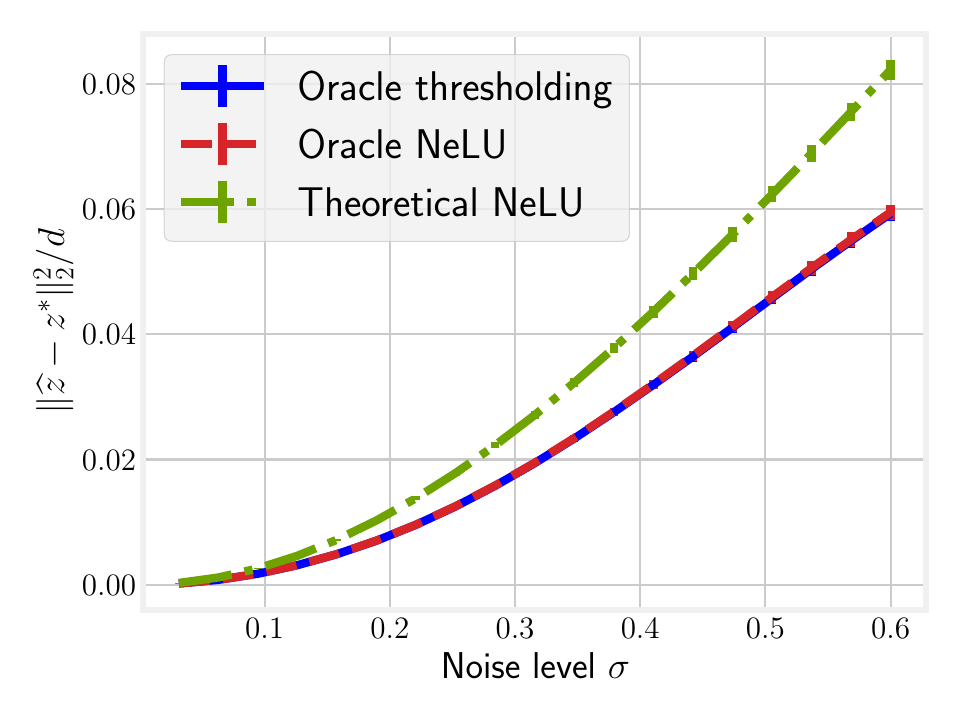}
\label{fig:exp1_n}}
\hfil
\subfloat[]{\includegraphics[width=0.4\linewidth]{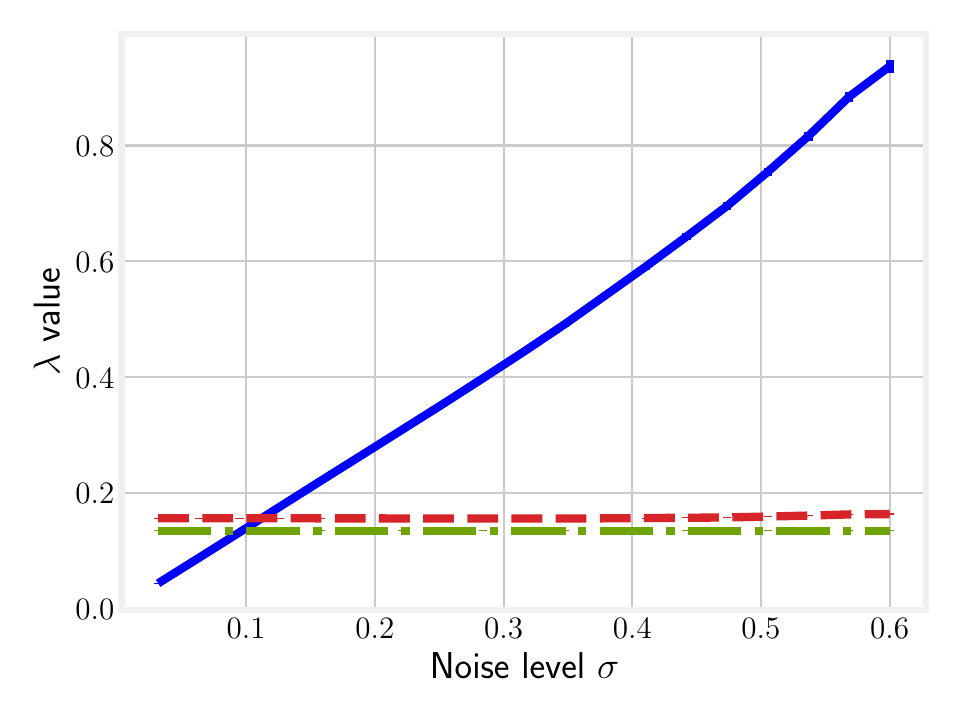}
\label{fig:lam}}
\caption{Experimental results for analytical transform with synthetic data.
(a) Mean squared error (MSE) of $\ell_2$ estimation error as a function of the noise level $\sigma$, evaluated in both settings. In the oracle setting, the regularization parameter is tuned to achieve the smallest estimation error. In the theoretical setting, we use $\lambda = \frac{1}{2}\frac{\norm{e}_{\infty}}{\norm{e}_2}$ in Algorithm~\ref{Alg:PGD}.
(b) $\lambda$ values used for each algorithm in the previous graph. Note that the optimal $\lambda$ values for \Cref{Alg:PGD} are constant, while they are linear for the traditional algorithm. The standard errors are below 0.02 and thus barely visible.}
\label{fig:exp1}
\end{figure*}

At each iteration, the algorithm computes the gradient of the objective function with respect to the model parameters at a point in the direction of the momentum and updates the momentum in the opposite direction of the gradient. \rev{The solution is then updated based to the momentum, taking into account the proximal operator. This operator introduces regularization to prevent overfitting and improve the generalization performance of the model.}

\section{Experiments} \label{sec:Exp}

\rev{
In this section, we present experimental results to evaluate the effectiveness of our proposed method under three different settings. First, in \Cref{exp:synthetic}, we use synthetic data to compare the performance of the soft-thresholding algorithm \eqref{eq:encoder} with that of our proposed algorithm \eqref{eq:our}, which we minimize using the iterative approach detailed in \Cref{Alg:PGD}, \emph{given a known transformation}. Next, in \Cref{exp:trainable}, we use the same synthetic data to assess the trainable version of our method, where \emph{the transformation matrix is also learned}, as summarized in \Cref{Alg:Nelu}. Here, we compare our method with a baseline model based on a standard sparse auto-encoder. Lastly, in \Cref{exp:natural}, we evaluate the performance of our trainable \Cref{Alg:Nelu} against a baseline convolutional neural network in the task of image denoising.
}

\subsection{Synthetic data} \label{exp:synthetic}

First, we present experiments conducted on synthetic data to demonstrate the advantage of our proposed method over the traditional sparse encoder algorithm \eqref{eq:encoder}. We assume that the transformation $W$ is known and construct a $100 \times 100$ random matrix, where each entry is sampled from the standard normal distribution, and then normalize the rows to have unit $\ell_2$ norm to satisfy Assumption \ref{ass:norm_w}. Next, we generate the input signal by creating a vector $z^*$ with fixed sparsity level, following the procedure described in \cite{Sulam2020MLPursuit}, to obtain a signal consistent with the transform model, such that $\|Wx\|_0 = 5$. Finally, we contaminate the signal with \iid Gaussian noise of level $\sigma$ to produce the measurements $y = x + \xi$.

We evaluate the estimation error, $\|\hat{z} - z^*\|_2$, which measures the distance between the estimated signal $\hat{z}$ and the true signal $z^*$, as a function of the noise standard deviation $\sigma$. We compare the solutions obtained by minimizing \eqref{eq:encoder} and \eqref{eq:our} in two settings. In the first setting, we perform an oracle cross-validation that sweeps over a range of parameters $\lambda$ to find the regularization parameter that minimizes the estimation error for each algorithm. Importantly, this setting is infeasible since it requires the ground truth data to calculate the estimation error. Nevertheless, it reveals the best performance that one can hope to achieve. In the second setting, we consider a more realistic scenario where we compare the performance of the proposed \Cref{Alg:PGD} using the theoretical value of $\lambda$ divided by 2, following Belloni's empirical improvement~\cite{Belloni2011sqrt}.

Figure \ref{fig:exp1} shows that both algorithms achieve similar performance in the oracle setting. However, the optimal value of $\lambda$ for the soft-thresholding algorithm \eqref{eq:encoder} is linearly proportional to the noise level $\sigma$, while the optimal value for the proposed algorithm \eqref{eq:our} is pivotal to it. This conclusion is consistent with our theoretical analysis presented in \Cref{sec:Theory}. Additionally, we observe that the theoretical value of $\lambda$ for the proposed algorithm, described in \Cref{Thm:supp}, is very close to the actual optimal value. This indicates that the proposed optimization problem may be a reasonable alternative to the current soft-thresholding algorithm, which forms classic encoder architectures, in practical situations where the noise level is unknown.

\subsection{Trainable transforms} \label{exp:trainable}
Trainable transforms often outperform analytical transformations, such as total variation and wavelets, in most signal processing applications~\cite{SparseTransforms}. This is because the transformation used in signal processing is frequently unknown and must be inferred from the data. This motivates the use of neural networks to simultaneously learn the transformation and the sparse representation of the data.

The first learning task is supervised sparse coding, where the input consists of signals $y$ determined by the model and their corresponding synthetic sparse vectors $z^*$ generated by a sparsifying transform, as described in \Cref{exp:synthetic}. \rev{In this case, the goal of the neural network is to learn the transformation $W$ stored in its weights and accurately identify the corresponding sparse output vector given a set of input-output pairs.} Mathematically, this can be expressed as an end-to-end training scheme, minimizing the cost function
\begin{equation*}
    \min_{W, \lambda, \alpha, \beta} \|\hat z - z^*\|^2_2 ,
\end{equation*}
where $\hat z$ is the output of the network outlined in Figure \ref{fig:nelu}.

\begin{figure}[ht]
\centering
\includegraphics[width=0.8\linewidth]{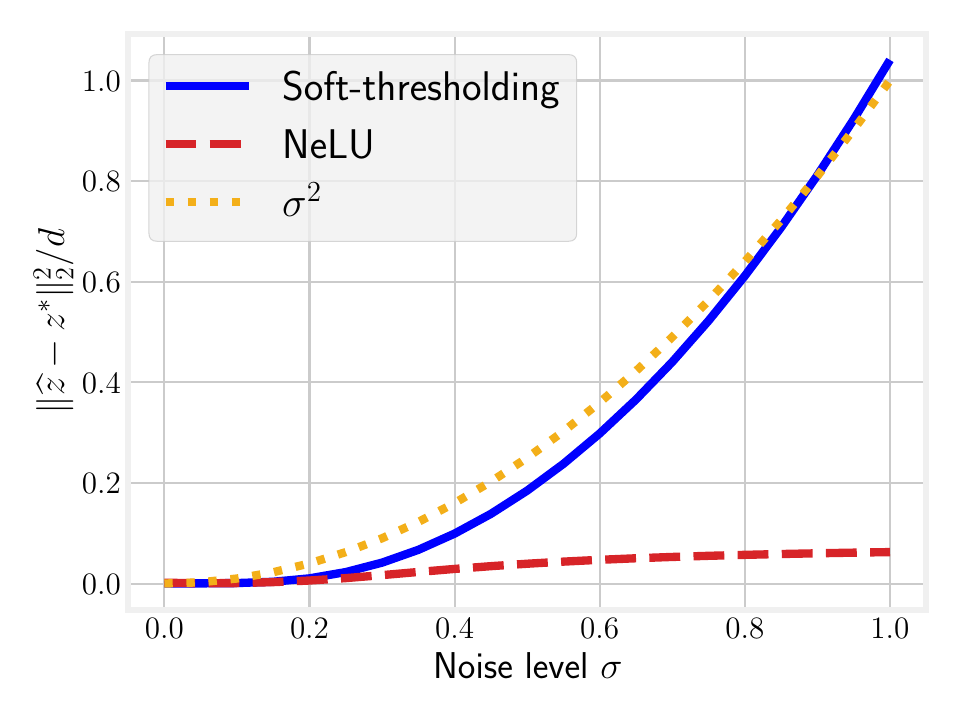}
\caption{Synthetic supervised sparse coding: a comparison of mean squared error (MSE) for estimation error between a two-layer sparse encoder architecture with NeLU and a similar architecture with soft-thresholding, trained on data with a fixed noise level of 0.1. The performance is evaluated at different noise levels, averaged over 2048 realizations of the data.}
\label{fig:rec}
\end{figure}

To compare the effectiveness of the proposed approach, two different neural network architectures are used: one based on Algorithm~\ref{Alg:Nelu}, and a baseline sparse encoder architecture. Both networks consist of a linear layer followed by a thresholding layer. In the baseline version, the thresholding layer is the soft-thresholding non-linearity. In contrast, our proposed architecture uses the Accelerated NeLU non-linearity as presented in \Cref{Alg:Nelu}, \rev{with the soft-thresholding operator}. Both networks are trained using the AdamW optimizer~\cite{2017adamW} and minimize the mean squared error (MSE) loss with a fixed noise standard deviation of $\sigma=0.1$. After training, we evaluate the performance of the fitted networks on different noise levels, $\sigma$, than those used during training. The results, displayed in Figure \ref{fig:rec}, suggest that NeLU is significantly more robust to unseen noise levels than soft-thresholding, indicating that the NeLU non-linearity results in a predictive model that generalizes to other noise levels without additional training.

We also demonstrate the effectiveness of the proposed model in signal denoising applications. In this task, the input is a noisy signal $y = x + \xi$ and the goal is to produce a cleaned version of the signal, $\hat x = W^+ \hat z$, by removing the additive noise. To this end, a final linear layer is added to each network according to the sparse model. The first two layers perform sparse coding, i.e., they estimate the sparse representation, while the last layer projects the sparse estimation back into the input space. Sparse data is generated in the same manner as before and the MSE loss is minimized. The results, shown in Figure \ref{fig:den}, again reveal that the NeLU leads to more stable recovery than the soft-thresholding non-linearity.

\begin{figure}[htb]
\centering
\includegraphics[width=0.8\linewidth]{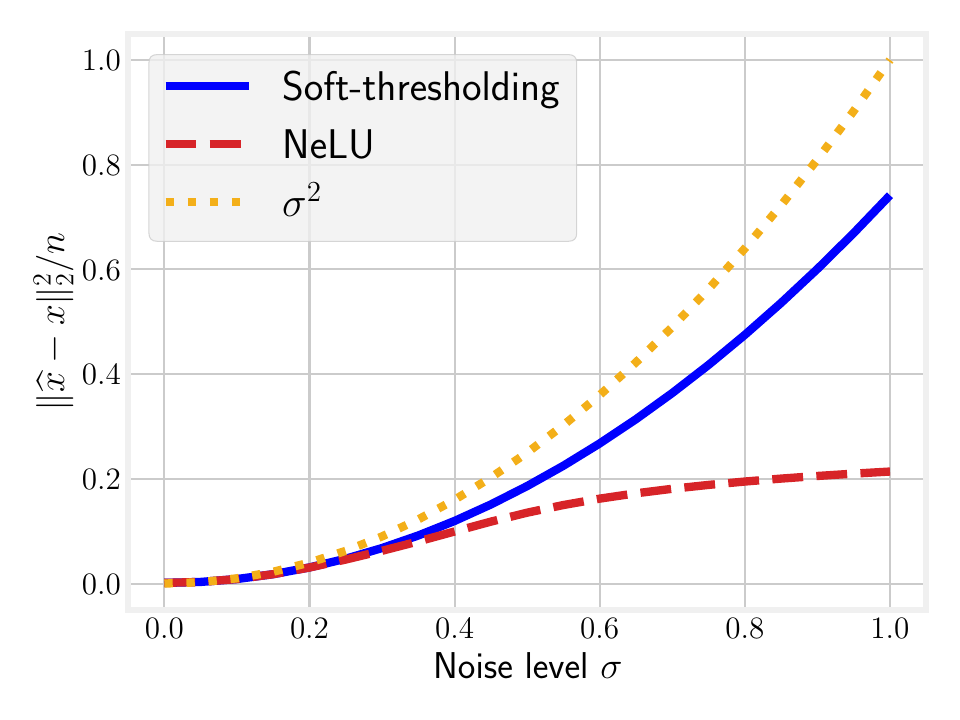}
\caption{Synthetic sparse signal denoising: a comparison of the mean squared error (MSE) for the reconstruction error, $\hat x - x$. Other details are the same as in Figure~\ref{fig:rec}.}
\label{fig:den}
\end{figure}

\subsection{Natural images} \label{exp:natural}
\rev{
In this experiment, the aforementioned networks are employed to perform natural image denoising using a patch averaging technique based on the Convolution Sparse Coding model~\cite{simon2019rethinking}. To accomplish this, each input image is replicated $\textit{stride}$ times and translated across every dimension, producing slight shifts of the original for each replica. As a result, a single image transforms into a set of $\textit{stride}^2$ slightly offset variations. These shifted versions of the input are then processed collectively by the network, resulting in intermediate (shifted) denoised versions of the same input image. Finally, these intermediate denoised output images are shifted back and averaged to yield the final reconstructed output image. This process is visualized in Figure \ref{fig:stride} for a one-dimensional (1D) signal. For more details, see~\cite{simon2019rethinking}.
}

The datasets and preprocessing procedures are adopted from \cite{simon2019rethinking}. Specifically, we use clean training images from the Waterloo Exploration dataset~\cite{Waterloo}, and a validation set consisting of $432$ images from BSD \cite{BSD}. Noisy images are generated by adding white Gaussian noise with a constant standard deviation $\sigma=15$. In each iteration, we randomly crop a patch of size $128^2$ from an image and obtain a random realization of the noise.

In this setting, we replace the linear layers with convolution and deconvolution layers, respectively. Concurrently, the soft-thresholding operator is substituted by ReLU \rev{as the proximal operator}. The models learn 175 filters of dimensions $11\times11$ and a stride of 8. We utilize the AdamW optimizer with a learning rate of $2\cdot10^{-2}$, which is reduced by a factor of 0.7 after every 50 epochs. Additionally, the optimizer's $\epsilon$ parameter is set to $10^{-3}$ to ensure stability. The models are trained for 300 epochs, minimizing the MSE loss.

To evaluate the performance of the models, we use the BSD68 dataset, which is distinct from the validation set. The experimental results, as shown in \Cref{tab:exp3}, allow us to compare the performance of each model on the test dataset at varying noise levels. We can see that the proposed \Cref{Alg:Nelu} layer outperforms the ReLU activation function for virtually all noise levels, and the performance gap widens as the noise level deviates further from the trained noise level.

\begin{table*}[t]
\caption{PSNR comparison of sparse auto-encoder models with NeLU and ReLU activations for natural image denoising. The models are trained on clean-noisy image pairs with a fixed noise level of $\sigma=15$ and evaluated on the test set.}
\label{tab:exp3}
\centering
\small
\renewcommand{\arraystretch}{1.4}
\begin{tabular}{@{}ccccccccc@{}} 
\hline
$\sigma$   & 15   & 25	& 35	& 50	& 75	& 90    & 105 & 120 \\
\hline
Noise & 24.61   & 20.17	& 17.25	& 14.15	& 10.63	& 9.05  & 7.70 & 6.55 \\
ReLU & 28.47 & 25.89	& \textbf{23.49}	& 20.58	& 17.07	& 15.48	& 14.13 & 12.99 \\
NeLU & \textbf{28.65} & \textbf{25.93}	& 23.48	& \textbf{20.69}	& \textbf{17.62}	& \textbf{16.37}	& \textbf{15.36}	& \textbf{14.57} \\
\hline
\end{tabular}
\end{table*}

\section{Conclusion}
In this work, we proposed a novel sparse auto-encoder architecture as an alternative to traditional auto-encoder architectures. We offer a novel activation function, called Self Normalizing ReLU (NeLU), which is the solution of a square root lasso problem under a transform learning formulation. Importantly, as we showed in \Cref{sec:Theory}, the bias parameter of our proposed NeLU layer is pivotal (i.e., invariant) to the noise level in the input signal. This feature leads to an activation function that is significantly more robust to varying noise levels in terms of signal recovery and denoising, both on synthetic data as well as in real imaging settings. Our research showcases how theoretical understanding of neural networks can give rise to improved algorithms, derived from theoretical insights and analysis.

\rev{
Several open questions present opportunities for future directions. While our paper focuses on establishing foundational theory, future efforts might apply these insights on a larger scale to develop a state-of-the-art network. A potential direction would be to extrapolate the model to a multilayer architecture, building upon the work reported in~\cite{Papyan2017convolutional}. The multilayer expansion strategy proposed by \cite{Sulam2018Pursuit} also offers an attractive option. Incorporating additional layers into the model, and possibly broadening the analysis to convolutional neural network (CNN) architectures, could result in improved theoretical bounds and performance.
}

\section*{Acknowledgments}
N.G. and Y.R. were supported by the Israel Science Foundation (grant No. 729/21). Y.R. also thanks the Career Advancement Fellowship, Technion, for providing research support. J.S. is supported by NSF grant CCF 2007649.

\appendices
\section{Lemmas} \label{App:lemmas}

\begin{lemma} \label{lem:concentration}
Consider a Gaussian vector $\xi \sim \mathcal{N}(0,\sigma^2 I)$ and a deterministic matrix $W$ with normalized rows, where $s_{\min}$ and $s_{\max}$ denote the minimum and maximum singular values of $W$, respectively. Then,
\begin{enumerate}
\item \label{lem:event_lasso}
    Let $\cC_1 \eqdef \left \{\frac{1}{\sqrt{n}} \norm{W \xi}_\infty \leq \frac{\lambda}{2} \right \}$.
    Take $\lambda = a \sigma \sqrt{\frac{\log d}{n}}$ and $a > 2 \sqrt{2}$, then:
    \begin{equation*}
        \bP (\cC_1) \geq 1 - 2 d^{1- a^2/8} .
    \end{equation*}
\item \label{lem:event_sqrt_lasso}
    Let $\cC_2 \eqdef \left \{ \frac{1}{\sqrt{n}} \norm{W \xi}_\infty \leq \frac{\lambda}{2} \frac{\sigma}{\sqrt{2}} \right \}$.
    Take $\lambda = a \sqrt{\frac{2 \log d}{n}}$ and $a > 2 \sqrt{2}$, then:
    \begin{equation*}
        \bP (\cC_2) \geq 1 - 2 d^{1- a^2/8} .
    \end{equation*}
\item \label{lem:event_chi_square}
    Let $\cC_3 \eqdef \left\{ s_{\min} \frac{\sigma}{\sqrt{2}} < \frac{\norm{W \xi}_2}{\sqrt{n}} < 2 s_{\max} \sigma \right\}$, then:
    \begin{equation*}
        \bP (\cC_3) \geq 1 - (1+e^2)e^{-n/24} .
    \end{equation*}
\item \label{lem:control_A}
    Let
    \begin{multline*}
    \cA = \left\{ \frac{\norm{W \xi}_{\infty}}{\norm{W \xi}_2} \leq \frac{\lambda}{2 s_{\min}} \right\} \cap \\
    \left\{ s_{\min} \frac{\sigma}{\sqrt{2}} < \frac{\norm{W \xi}_2}{\sqrt{n}} < 2 s_{\max} \sigma \right\},
    \end{multline*}
    then:
    \begin{equation*}
        \bP(\cA) \geq 1 - \bP (\cC^c_2) - \bP (\cC^c_3) .
    \end{equation*}
\end{enumerate}
\end{lemma}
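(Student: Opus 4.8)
The plan is to treat the four items in turn, exploiting that items~\ref{lem:event_lasso} and~\ref{lem:event_sqrt_lasso} are essentially the same statement after a change of scale, that item~\ref{lem:event_chi_square} is a norm-concentration fact, and that item~\ref{lem:control_A} is a purely deterministic set inclusion followed by a union bound. For item~\ref{lem:event_lasso}, the key observation is that under \Cref{ass:norm_w} each coordinate $(W\xi)_j = \langle w_j, \xi\rangle$ is a centered Gaussian with variance $\norm{w_j}_2^2 \sigma^2 = \sigma^2$. First I would apply the standard sub-Gaussian tail bound $\bP(|(W\xi)_j| > t) \le 2 e^{-t^2/(2\sigma^2)}$ with $t = \tfrac{\lambda\sqrt n}{2} = \tfrac{a\sigma\sqrt{\log d}}{2}$, so that $t^2/(2\sigma^2) = \tfrac{a^2\log d}{8}$, and then union bound over the $d$ coordinates to get $\bP(\cC_1^c) \le 2 d\cdot d^{-a^2/8} = 2 d^{1-a^2/8}$. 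For item~\ref{lem:event_sqrt_lasso} I would simply substitute $\lambda = a\sqrt{2\log d/n}$ into the threshold $\tfrac{\lambda}{2}\tfrac{\sigma}{\sqrt2}$, check that it equals $\tfrac{a\sigma}{2}\sqrt{\log d/n}$, i.e.\ the same per-coordinate threshold as in item~\ref{lem:event_lasso} (up to the $\tfrac{1}{\sqrt n}$ normalization), and invoke the identical argument.

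For item~\ref{lem:event_chi_square} I would first sandwich $\norm{W\xi}_2$ deterministically: since the eigenvalues of $W^{\mathsf T}W$ lie in $[s_{\min}^2, s_{\max}^2]$, we have $s_{\min}\norm{\xi}_2 \le \norm{W\xi}_2 \le s_{\max}\norm{\xi}_2$. Writing $\xi = \sigma g$ with $g\sim\cN(0,I_n)$, it then suffices to control $\norm{g}_2/\sqrt n$: the event $\{\norm{W\xi}_2 < 2 s_{\max}\sigma\sqrt n\}$ is implied by $\{\norm{g}_2 < 2\sqrt n\}$, and $\{\norm{W\xi}_2 > s_{\min}\tfrac{\sigma}{\sqrt2}\sqrt n\}$ is implied by $\{\norm{g}_2 > \sqrt{n/2}\}$. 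Both are handled by concentration of the Gaussian norm around its mean ($\norm{g}_2$ is $1$-Lipschitz and $\bE\norm{g}_2$ is within a universal correction of $\sqrt n$), or equivalently by Laurent--Massart-type $\chi^2_n$ tail inequalities applied to $\norm{\xi}_2^2/\sigma^2$; adding the two resulting tail probabilities via a union bound yields the stated $1-(1+e^2)e^{-n/24}$.

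For item~\ref{lem:control_A} I would argue deterministically that $\cC_2 \cap \cC_3 \subseteq \cA$. On $\cC_3$ we have $\norm{W\xi}_2 > s_{\min}\tfrac{\sigma}{\sqrt2}\sqrt n$, and on $\cC_2$ we have $\norm{W\xi}_\infty \le \tfrac{\lambda}{2}\tfrac{\sigma}{\sqrt2}\sqrt n$; dividing gives $\norm{W\xi}_\infty/\norm{W\xi}_2 \le \tfrac{\lambda}{2 s_{\min}}$, which is precisely the first event defining $\cA$, while the second event defining $\cA$ is exactly $\cC_3$. Hence $\bP(\cA) \ge \bP(\cC_2\cap\cC_3) \ge 1 - \bP(\cC_2^c) - \bP(\cC_3^c)$ by a union bound, as claimed.

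The main obstacle is item~\ref{lem:event_chi_square}: items~\ref{lem:event_lasso}, \ref{lem:event_sqrt_lasso}, and~\ref{lem:control_A} are routine, but pinning down the explicit constants $1+e^2$ and $n/24$ requires choosing the right $\chi^2_n$ (or Gaussian-norm) tail inequality and carefully tracking the gap between $\bE\norm{g}_2$ and $\sqrt n$ that governs the \emph{lower} tail; the upper tail $\{\norm{g}_2 \ge 2\sqrt n\}$ is comparatively loose and can simply be absorbed into the stated bound since $e^{-n/2}\le e^{-n/24}$.
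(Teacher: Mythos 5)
Your proposal is correct and follows essentially the same route as the paper: a per-coordinate Gaussian tail bound plus a union bound for items 1--2, the singular-value sandwich $s_{\min}\norm{\xi}_2 \leq \norm{W\xi}_2 \leq s_{\max}\norm{\xi}_2$ combined with concentration of $\norm{\xi}_2$ for item 3, and the deterministic inclusion $\cC_2 \cap \cC_3 \subseteq \cA$ followed by a union bound for item 4. The only difference is that for item 3 the paper simply cites Giraud's control of the event $\left\{ \tfrac{\sigma}{\sqrt{2}} \leq \tfrac{\norm{\xi}_2}{\sqrt{n}} \leq \left(2-\tfrac{1}{\sqrt{2}}\right)\sigma \right\}$ to obtain the constants $1+e^2$ and $n/24$, exactly the step you flag as the remaining bookkeeping.
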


\begin{proof}
\Cref{lem:event_lasso}:
Since $\xi$ is isotropic, the law of $d^\mathsf{T} \xi$ is the same for all vectors $d \in \bR^n$ of the same norm.
In particular, $W_i \xi$, where $W_i$ is the $i$th row of $W$, and $\xi_{1}$ have the same law.
\begin{align*}
    \bP\left(\cC_1^c\right)
    & \leq \sum^d_i \bP \left(\left|W_{i} \xi \right| \geq \sqrt{n} \lambda / 2 \right) \\
    & \leq d \, \bP \left(|\xi_1 | \geq \sqrt{n} \lambda / 2 \right) & (\norm{W_{i}}_2=1) \\
    & \leq 2 d \exp \left(-\frac{n \lambda^{2}}{8 \sigma^{2}}\right) & \text{(Hoeffding’s Inequality)} \\
    & \leq 2 d^{1-\frac{A^{2}}{8}} & (\lambda = A \sigma \sqrt{\frac{\log d}{n}}) .
\end{align*}
\Cref{lem:event_sqrt_lasso} is a direct consequence of \Cref{lem:event_lasso}.

\Cref{lem:event_chi_square}: 
Giraud~\cite{giraud_statistics} controls the event
\begin{equation*}   
    \left\{ \frac{\sigma}{\sqrt{2}} \leq \frac{\norm{\xi}_2}{\sqrt{n}} \leq \left(2 - \frac{1}{\sqrt{2}}\right) \sigma \right\}.
\end{equation*}
with probability $(1+e^2)e^{-n/24}$. Therefore, we can control our event by
\begin{equation*}
    s_{\min} \frac{\norm{\xi}_2}{\sqrt{n}} \leq \frac{\norm{W \xi}_2}{\sqrt{n}} \leq s_{\max} \frac{\norm{\xi}_2}{\sqrt{n}} .
\end{equation*}

Proof of \Cref{lem:control_A} is done using~\cref{lem:event_sqrt_lasso,lem:event_chi_square}. Indeed we have
\begin{multline*}
    \cA \supset \left\{ s_{\min} \frac{\sigma}{\sqrt{2}} < \frac{\norm{W \xi}_2}{\sqrt{n}} < 2 s_{\max} \sigma \right\} \cap \\
    \left\{ \frac{\norm{W \xi}_{\infty}}{\sqrt{n}} \leq \frac{\lambda \sigma}{2\sqrt{2}} \right\}
     = \cC_2 \cap \cC_3.
\end{multline*}
Hence $\bP(\cA) \geq 1 - \bP(\cC^c_2) - \bP(\cC^c_3)$.
\end{proof}

\begin{remark}
The bounds and probabilities presented in this analysis can be further refined through a more rigorous examination. Nevertheless, these bounds are sufficient for our primary objective, which is to demonstrate the pivotalness of $\lambda$ in the Gaussian scenario.
\end{remark}

\section{Extension to synthesis model} \label{App:synthesis}
Note that if we define $D = W^\mathsf{T}$,
\begin{equation*}
    \bar z = D^\mathsf{T} D z^*.
\end{equation*}
Then the problem \eqref{eq:our} can be rewritten as
\begin{equation*}
    \min_ z \norm{z - \bar y}_2 +\lambda \| z\|_1,
\end{equation*}
where $\bar y = D^\mathsf{T} y = \bar z + e$. This is identical to the square root lasso when the dictionary is the identity matrix. Therefore, it can be solved for $\bar z$ using all the tools available for the square root lasso, as previously studied in \cite{Belloni2011sqrt, massias2020support}. We extend the previous results of the paper to obtain bounds for the synthesis model.

\begin{theorem} \label{Thm:syn_supp}
Following the assumptions defined in \Cref{Thm:supp}, we have
\begin{equation*}
    \|{\hat z -  z^*}\|_\infty \leq \lambda (2 + \eta) \epsilon + \rho \, \mu(D) \| z^*\|_\infty ,
\end{equation*}
where $\rho = \|z^*\|_0$ and $\mu(D)=\max_{i\ne j} |D_{i}^\mathsf{T} D_{j}|$. Moreover, if
\begin{equation} \label{eq:min_signal_estimation}
    \min_{j \in [d]} |z_j^*| > 2 \lambda (2 + \eta) \epsilon + 2\rho \, \mu(D) \| z^*\|_\infty ,
\end{equation}
then the estimated support
\begin{equation*} 
    \hat \cS = \{j \in [d] : |\hat z_j| > \lambda (2 + \eta) \epsilon + \rho \, \mu(D) \| z^*\|_\infty \}
\end{equation*}
recovers the true sparsity pattern correctly, i.e., $\hat \cS = \cS$.
\end{theorem}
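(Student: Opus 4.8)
The plan is to reuse the reduction spelled out at the start of this appendix. Writing $D = W^\mathsf{T}$, the minimization \eqref{eq:our} is literally a square root lasso with target $\bar z = D^\mathsf{T} D z^*$, observation $\bar y = \bar z + e$, and error $e = D^\mathsf{T}\xi = W\xi$. Since Assumption \ref{ass:norm_w} makes the rows of $W$ --- hence the columns $D_j$ of $D$ --- unit-norm, and since $\lambda = \norm{e}_\infty/\norm{e}_2$ together with Assumption \ref{ass:eta} (now read with $\bar z$ in the role of $z^*$) are exactly the hypotheses of \Cref{Thm:supp}, that theorem applies to the pair $(\hat z, \bar z)$ and yields $\norm{\hat z - \bar z}_\infty \leq \lambda(2+\eta)\epsilon$.

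The remaining work is to compare $\bar z$ with $z^*$. The diagonal of $D^\mathsf{T} D - I$ vanishes because $\norm{D_j}_2 = 1$, so coordinate-wise $|(\bar z - z^*)_i| = |\sum_{j \ne i} D_i^\mathsf{T} D_j z^*_j| \leq \mu(D)\norm{z^*}_1 \leq \rho\,\mu(D)\norm{z^*}_\infty$, using $\norm{z^*}_1 \leq \norm{z^*}_0\norm{z^*}_\infty$. A triangle inequality then gives $\norm{\hat z - z^*}_\infty \leq \norm{\hat z - \bar z}_\infty + \norm{\bar z - z^*}_\infty \leq \lambda(2+\eta)\epsilon + \rho\,\mu(D)\norm{z^*}_\infty$, which is the first claim.

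For the support statement, abbreviate the right-hand side above as $\tau$, so $\norm{\hat z - z^*}_\infty \leq \tau$ and $\hat\cS = \{j : |\hat z_j| > \tau\}$. If $j \in \cS$, then \eqref{eq:min_signal_estimation} gives $|z^*_j| > 2\tau$, hence $|\hat z_j| \geq |z^*_j| - \tau > \tau$ and $j \in \hat\cS$; if $j \notin \cS$, then $z^*_j = 0$ forces $|\hat z_j| \leq \tau$ and $j \notin \hat\cS$. Therefore $\hat\cS = \cS$. I expect the only delicate point to be bookkeeping: one must invoke Assumption \ref{ass:eta} for $\bar z$ rather than for $z^*$ when borrowing \Cref{Thm:supp}, and confirm that the noise bound $\epsilon$ on $\norm{e}_2 = \norm{W\xi}_2$ is literally the same quantity in both the transform and synthesis formulations --- which it is, since the reduction leaves $e$ untouched. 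Everything else is a routine chain of triangle inequalities together with the mutual-coherence estimate.
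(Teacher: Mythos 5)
Your proof follows essentially the same route as the paper's: the triangle inequality through $\bar z = D^\mathsf{T}D z^*$, an appeal to \Cref{Thm:supp} for the first term, and the coherence bound $\norm{(I-D^\mathsf{T}D)z^*}_\infty \leq \rho\,\mu(D)\norm{z^*}_\infty$ for the second, with the support-recovery step being the standard half-threshold argument. You even correctly flag a subtlety the paper glosses over, namely that Assumption~\ref{ass:eta} must be read with $\bar z$ in place of $z^*$ when \Cref{Thm:supp} is invoked for the pair $(\hat z, \bar z)$.
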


\begin{proof}[Proof of \Cref{Thm:syn_supp}]
Using \Cref{Thm:supp} and the sparsity of $z^*$, we have
\begin{align*}
  \norm{\hat z -  z^*}_{\infty}
  &\leq
  \norm{\hat z - \bar z}_{\infty} + \norm{\bar z -  z^*}_{\infty} \\
  &\leq \lambda (2 + \eta) \epsilon + \norm{(I - D^\mathsf{T} D)  z^*}_{\infty} \\ 
  &= \lambda (2 + \eta) \epsilon + \max_i |(I - D^\mathsf{T} D)_\cS  z^*_\cS|_i \\
  &\leq \lambda (2 + \eta) \epsilon + \max_i \|(I - D^\mathsf{T} D)_{\cS i}\|_1 \| z^*\|_\infty \\
  &\leq \lambda (2 + \eta) \epsilon + \rho \, \mu(D) \|z^*\|_\infty .
\end{align*}
This proves the bound on $\norm{\hat z -  z^*}_\infty$.
Then, the support recovery property easily follows as in \Cref{Thm:supp}.
\end{proof}

\begin{corollary}
From the condition \eqref{eq:min_signal_estimation}, we can derive a bound on the sparsity of the signal for correct support recovery:
\begin{gather*}
    |z^*_{\min}| > 2 \lambda (2 + \eta) \epsilon + 2 \rho \, \mu(D) | z^*_{\max}|, \\
    \rho < \frac{| z^*_{\min}|}{2\mu(D) | z^*_{\max}|} - \frac{\lambda(2 + \eta)\epsilon}{\mu(D)| z^*_{\max}|},
\end{gather*}
where $|z^*_{\min}|$ and $|z^*_{\max}|$ are the minimum and maximum absolute values of the entries of $ z^*$, respectively. This bound closely resembles the optimality condition of the thresholding lasso algorithm \cite{Papyan2017convolutional}.
\end{corollary}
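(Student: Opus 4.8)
The plan is to establish the Corollary as an almost immediate algebraic consequence of the support-recovery condition \eqref{eq:min_signal_estimation} in \Cref{Thm:syn_supp}. First I would recall that \Cref{Thm:syn_supp} already guarantees that the estimated support $\hat\cS$ coincides with the true support $\cS$ whenever
$$
\min_{j\in[d]} |z_j^*| > 2\lambda(2+\eta)\epsilon + 2\rho\,\mu(D)\|z^*\|_\infty,
$$
so nothing probabilistic or analytic remains to be proven — the content of the Corollary is simply to re-expose this inequality as a bound on the sparsity level $\rho = \|z^*\|_0$, which is the quantity a practitioner can most naturally reason about. I would write $|z^*_{\min}| \eqdef \min_{j\in[d]}|z_j^*|$ and $|z^*_{\max}| \eqdef \|z^*\|_\infty = \max_{j}|z_j^*|$, so that the hypothesis reads $|z^*_{\min}| > 2\lambda(2+\eta)\epsilon + 2\rho\,\mu(D)|z^*_{\max}|$.

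The key step is then to solve this inequality for $\rho$. Subtracting $2\lambda(2+\eta)\epsilon$ from both sides gives $|z^*_{\min}| - 2\lambda(2+\eta)\epsilon > 2\rho\,\mu(D)|z^*_{\max}|$, and dividing through by the positive quantity $2\mu(D)|z^*_{\max}|$ — which is legitimate provided $\mu(D) > 0$ and $z^* \neq 0$, the standard running assumptions — yields exactly
$$
\rho < \frac{|z^*_{\min}|}{2\mu(D)|z^*_{\max}|} - \frac{\lambda(2+\eta)\epsilon}{\mu(D)|z^*_{\max}|},
$$
as claimed. Since every step is reversible, this bound on $\rho$ is in fact equivalent to \eqref{eq:min_signal_estimation}, so the Corollary correctly characterizes the regime of guaranteed support recovery rather than merely implying it.

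There is essentially no obstacle here: the only points requiring a word of care are the sign conditions used when dividing (we need $\mu(D) > 0$, automatic unless the atoms are pairwise orthogonal, and $|z^*_{\max}| > 0$, automatic unless $z^* = 0$), and the observation that the resulting bound is vacuous unless $|z^*_{\min}| > 2\lambda(2+\eta)\epsilon$ — i.e., the noise floor must be small enough that support recovery is information-theoretically plausible in the first place. I would close by remarking, as the paper does, that the form of this bound — a ratio of signal dynamic range to mutual coherence, minus a noise-dependent correction — mirrors the classical thresholding/lasso recovery conditions in the convolutional sparse coding literature \cite{Papyan2017convolutional}, which is the intended takeaway.
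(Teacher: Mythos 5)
Your proposal is correct and matches the paper's (implicit) derivation exactly: the corollary is obtained by subtracting $2\lambda(2+\eta)\epsilon$ from the support-recovery condition of \Cref{Thm:syn_supp} and dividing by $2\mu(D)\lvert z^*_{\max}\rvert$, which is precisely the rearrangement you perform. Your added remarks on the sign conditions $\mu(D)>0$ and $z^*\neq 0$ are sensible housekeeping that the paper leaves unstated.
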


\begin{theorem}\label{Thm:syn_sqrt}
Following the assumptions defined in \Cref{Thm:sqrt}, then, we get
\begin{equation*}
    \|{\hat z -  z^*}\|_2 \leq (2 + \eta) \epsilon + \mu(D) \sqrt{d} \| z^*\|_1 .
\end{equation*}
\end{theorem}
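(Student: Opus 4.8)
The plan is to mirror the proof of \Cref{Thm:syn_supp}, but measuring the error in $\ell_2$ instead of $\ell_\infty$. Recall that in the synthesis reformulation with $D = W^{\mathsf{T}}$, the estimator $\hat z$ solves $\min_z \norm{z - \bar y}_2 + \lambda\norm{z}_1$ with $\bar y = \bar z + e$ and $\bar z = D^{\mathsf{T}} D z^*$; so it is $\bar z$, not $z^*$, that plays the role of the clean representation in the square-root-lasso problem. The first step is therefore to split the target error with the triangle inequality,
\begin{equation*}
\norm{\hat z - z^*}_2 \leq \norm{\hat z - \bar z}_2 + \norm{\bar z - z^*}_2 ,
\end{equation*}
and control the two terms separately.

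For the first term I would invoke \Cref{Thm:sqrt} applied to the square-root-lasso problem in $\bar y$, with $\bar z$ in the place of the clean signal (so that \Cref{ass:eta} is read as $\lambda\norm{\bar z}_1 \leq \eta\epsilon$, exactly as is done implicitly in the proof of \Cref{Thm:syn_supp}). This gives $\norm{\hat z - \bar z}_2 \leq (2+\eta)\epsilon$ immediately.

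For the second term, I would write $\bar z - z^* = (D^{\mathsf{T}} D - I) z^*$ and use the mutual coherence. Since the columns of $D$ have unit $\ell_2$ norm (\Cref{ass:norm_w}), $D^{\mathsf{T}} D - I$ has a vanishing diagonal and off-diagonal entries bounded in magnitude by $\mu(D)$, so the $i$-th coordinate of $(D^{\mathsf{T}} D - I) z^*$ equals $\sum_{j \neq i}(D_i^{\mathsf{T}} D_j) z^*_j$ and is bounded by $\mu(D)\norm{z^*}_1$. Hence $\norm{(D^{\mathsf{T}} D - I) z^*}_\infty \leq \mu(D)\norm{z^*}_1$, and the elementary bound $\norm{v}_2 \leq \sqrt{d}\,\norm{v}_\infty$ for $v \in \bR^d$ turns this into $\norm{\bar z - z^*}_2 \leq \sqrt{d}\,\mu(D)\norm{z^*}_1$. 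Summing the two estimates yields the claimed bound.

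The argument is largely routine once the decomposition is set up correctly; the only subtle point is the same one underlying \Cref{Thm:syn_supp}, namely that \Cref{Thm:sqrt} has to be applied relative to $\bar z$ and that $\bar z - z^*$ is in general a dense vector in $\bR^d$, which is precisely why the $\ell_\infty$-to-$\ell_2$ passage contributes a factor $\sqrt{d}$ (rather than $\sqrt{\norm{z^*}_0}$). An alternative route, bounding $\norm{(D^{\mathsf{T}} D - I) z^*}_2$ by the operator norm of $D^{\mathsf{T}} D - I$ times $\norm{z^*}_2$, would replace the coherence by a spectral constant and so would not reproduce the stated form of the bound.
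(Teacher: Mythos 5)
Your proposal is correct and follows essentially the same route as the paper: the triangle inequality through $\bar z = D^{\mathsf{T}}Dz^*$, an application of \Cref{Thm:sqrt} to bound $\norm{\hat z - \bar z}_2$ by $(2+\eta)\epsilon$, and the coordinate-wise coherence bound $\norm{(I - D^{\mathsf{T}}D)z^*}_\infty \leq \mu(D)\norm{z^*}_1$ converted to $\ell_2$ via the $\sqrt{d}$ factor. The paper's displayed chain is exactly your argument written out, so there is nothing to add.
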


\begin{proof}[Proof of \Cref{Thm:syn_sqrt}]
Using \Cref{Thm:sqrt} and the sparsity of $ z^*$, we have:
\begin{align*}
  \norm{\hat z -  z^*}_2
  &\leq
  \norm{\hat z - \bar z}_2 + \norm{\bar z -  z^*}_2 \\
  &\leq
  (2 + \eta) \epsilon + \norm{(I - D^\mathsf{T} D)  z^*}_2 \\
  &\leq
  (2 + \eta) \epsilon + \sqrt{\sum^d_{i=1}{\mu(D)^2 \| z^*\|_1^2}} \\
  &\leq
  (2 + \eta) \epsilon + \mu(D) \sqrt{d} \| z^*\|_1 .
\end{align*}
\end{proof}

\section{Illustration of experiment in section \ref{exp:natural}}
\rev{
In the real-data experiment described in \Cref{exp:natural}, we follow the approach of Simon and Elad for deploying the Convolutional Sparse Coding (CSC) model [1]. The process of this experiment is illustrated in \Cref{fig:stride}.
}

\begin{figure*}[htb]
\centering
\includegraphics[width=.8\linewidth]{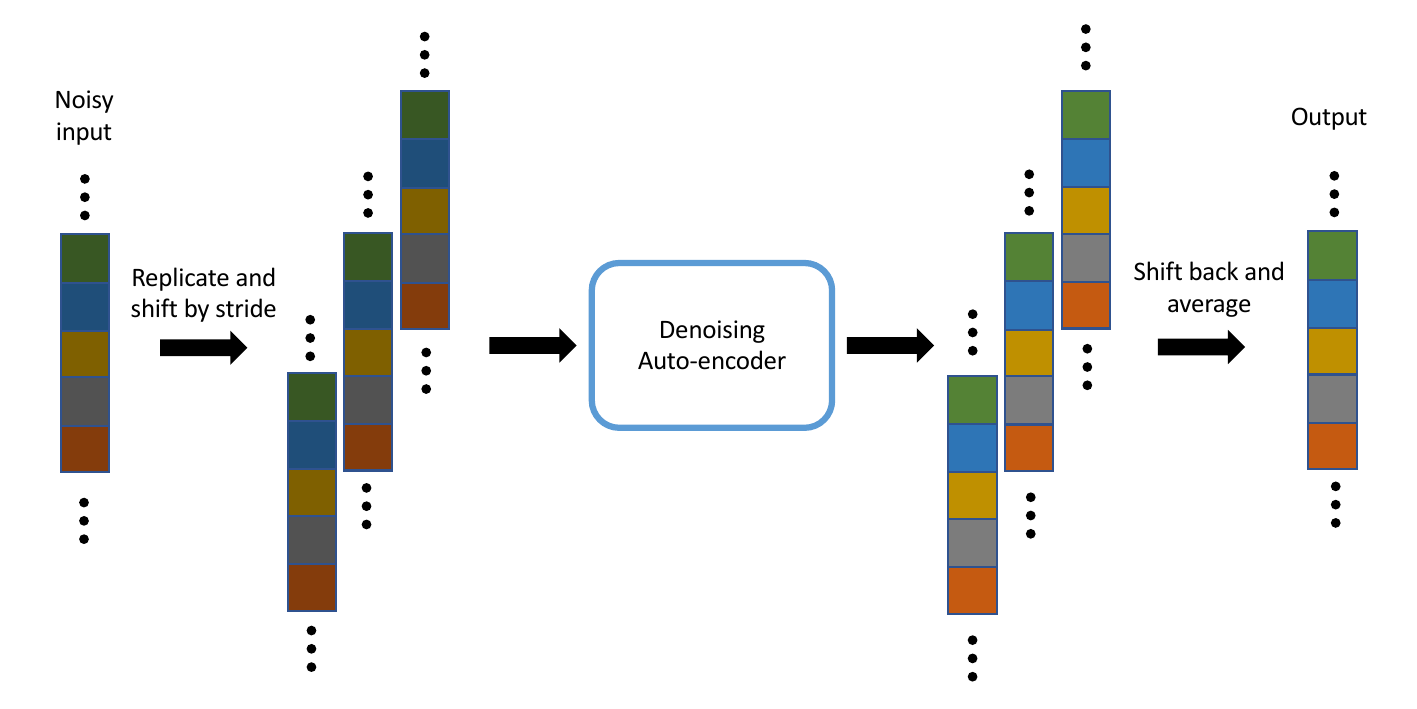}
\caption{1D example of the process utilized in the experiment in \Cref{exp:natural} when $\textit{stride}=2$. Each signal is replicated $\textit{stride}$ times and subsequently translated, yielding slight shifts of the original for each replica. This process effectively transforms a single image into a collection of $\textit{stride}$ variations, each exhibiting a slight spatial offset. The final output is an average of all denoised shifts.}
\label{fig:stride}
\end{figure*}

\section{Details of the analytical transform experiment - \texorpdfstring{$\ell_\infty$}{} error}
\rev{
In this section, we delve deeper into the performance of the algorithms from the main experiment, focusing on the $\ell_\infty$ estimation error. This analysis complements our earlier discussion centered around the $\ell_2$ error, as shown in \Cref{fig:exp1}.
}

\begin{figure}[ht]
\centering
\includegraphics[width=0.8\linewidth]{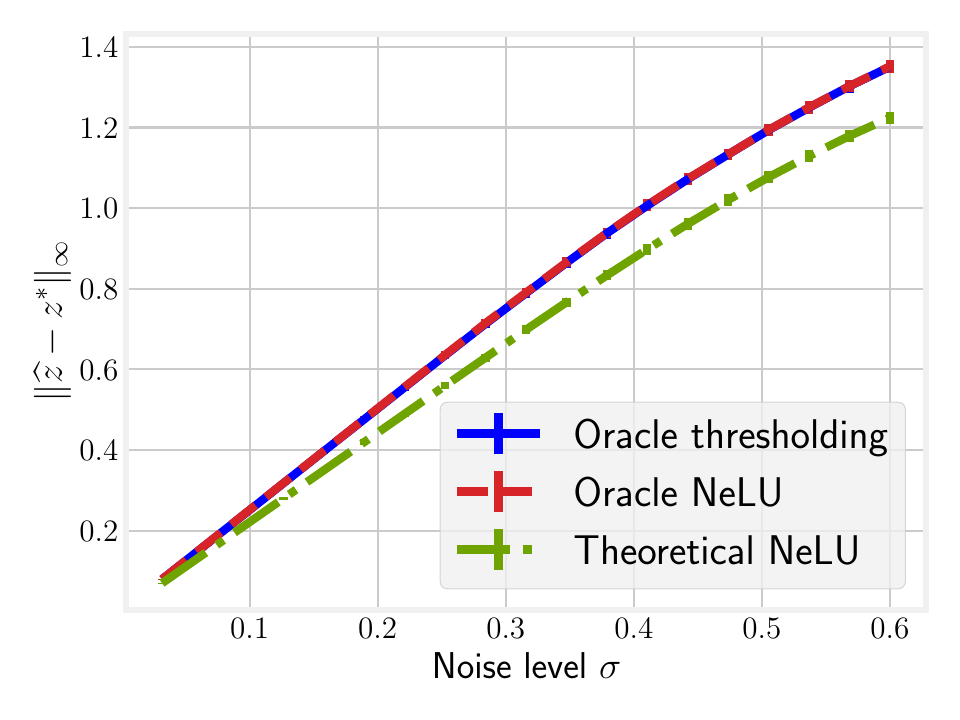}
\caption{The $\ell_\infty$ estimation error of each algorithm in the experiment presented in \Cref{fig:exp1}. The theoretical NeLU achieves a comparatively lower $\ell_\infty$ error because the oracle algorithms were optimized using the $\ell_2$ error. The standard errors are less than 0.02 and are thus barely noticeable.}
\label{fig:exp1_app}
\end{figure}

\rev{
As observed in \Cref{fig:exp1_app}, the theoretical NeLU outperforms other methods, yielding a lower $\ell_\infty$ error. This performance can be attributed to the fact that the oracle algorithms, in the primary experiment, were fine-tuned using the $\ell_2$ error criterion. A parallel behaviour is observed for the $\ell_2$ error when optimizing with respect to the $\ell_\infty$ error.
}

\section{Qualitative analysis}
\rev{
This appendix provides a visual representation of the outcomes from the experiments detailed in Section \ref{exp:natural}. The emphasis here is on a qualitative assessment of images processed using the networks specified in our study. Such an analysis aids comprehending the practical efficacy of the applied denoising methods.
}

\rev{
Figure \ref{fig:qual} showcases a comprehensive comparison, juxtaposing the original images with their noisy counterparts and the subsequent denoised versions. This layout facilitates a direct visual evaluation of the noise reduction capabilities of the networks. For each image displayed, a specific patch has been chosen for detailed analysis. Specifically, for each image, the selection includes the original image marked with the patch location, the unaltered patch, the corresponding noisy patch (the original with added Gaussian noise), and the denoised version processed by each network. Additionally, a heatmap of the residual error is presented, enabling a more precise and detailed comparison.
}

\begin{figure*}[htb]
\centering
\includegraphics[width=.9\linewidth]{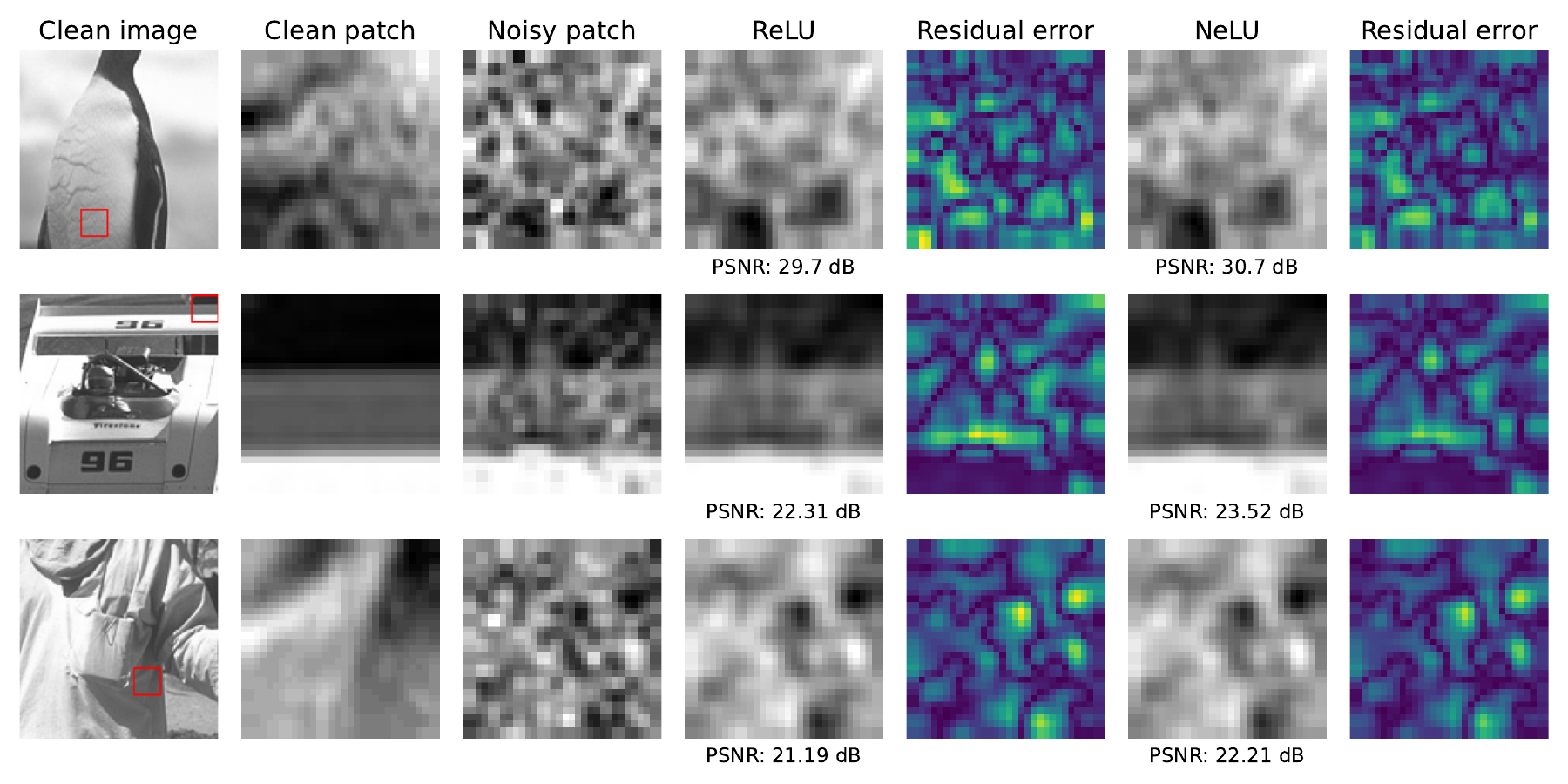}
\caption{Qualitative analysis of denoising networks from \Cref{exp:natural}. For each image, a selected patch is displayed in three states: the original, the version with Gaussian noise ($\sigma=25$ for the first image, $\sigma=50$ for the second and third), and the denoised output from each evaluated network. Additionally, heatmaps depict the residual errors, with warmer colors indicating larger absolute errors in those pixels. The Peak Signal-to-Noise Ratio (PSNR) values are also reported below each network's output to quantitatively assess the denoising performance.}
\label{fig:qual}
\end{figure*}

\rev{
The residual error visualized in the figure highlights the enhancements our algorithm achieved, particularly in handling previously unseen noise levels. This visual representation serves not only as a validation of the algorithm's effectiveness but also offers insights into its potential limitations and areas for future improvement.
}

\section{Empirical validation of Theorem \ref{Thm:sqrt}}
\label{App:bound}
\rev{
In this appendix, we extend our investigation to empirically validate the theoretical estimation error bound introduced within our theoretical framework. To accomplish this, we replicate the experimental setup delineated in \Cref{exp:synthetic}. Herein, we evaluate the performance of our proposed algorithm \eqref{eq:our} in juxtaposition with the theoretical threshold. Employing the iterative process specified in \Cref{Alg:PGD}, and considering a given transformation, we subject the algorithm to rigorous evaluation across a range of noise levels. For each noise level, the experiment encompasses 20 independent trials, incorporating diverse instances of signal and noise, with $\lambda$ set to $\frac{\norm{e}_{\infty}}{\norm{e}_2}$.
}

\begin{figure}[ht]
\centering
\includegraphics[width=0.8\linewidth]{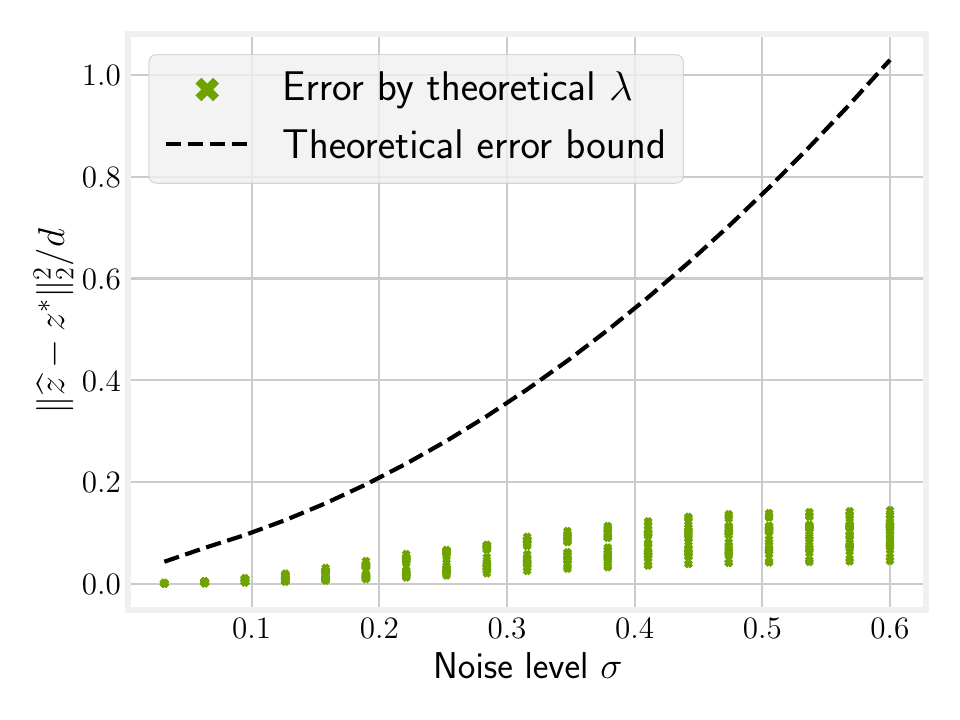}
\caption{Empirical validation of the theoretical estimation error bound. The figure showcases the MSE of the $\ell_2$ estimation error as a function of the noise level $\sigma$. For each noise level, 20 independent trials were conducted with distinct combinations of signal and noise, with $\lambda = \frac{\norm{e}_{\infty}}{\norm{e}_2}$.}
\label{fig:bound}
\end{figure}

\rev{
The outcomes of these experiments are illustrated in \Cref{fig:bound}. These experiments corroborate the assertion that the reconstruction error adheres to the bounds established in \Cref{Thm:sqrt}, thereby reinforcing the theoretical underpinnings of our methodology. Despite being conservative, the framework provides a reliable method for ensuring model robustness across different noise levels.
}

\rev{
Further investigation into the exact optimal value of $\lambda$, considering factors such as the sparsity level of the signals, presents a promising avenue for future research. It is our hope that these insights will contribute to a deeper understanding of the interplay between theoretical analysis and empirical application in the domain of sparse autoencoders.
}

\rev{
\bibliographystyle{IEEEtran}
\bibliography{bib}
}

\end{document}